\theoremstyle{thmstyleone}%
\newtheorem{theorem}{Theorem}%  meant for continuous numbers
\theoremstyle{thmstyletwo}%
\theoremstyle{thmstylethree}%
\newcommand{\ty}{\mathbf{y}}
\newcommand{\tyk}{{y^k}}
\newcommand{\tzj}{{z^j}}
\newcommand{\tz}{\mathbf{z}}
\newcommand{\tc}{\mathbf{c}}
\newcommand{\bty}{\mathbf{y}_i}
\newcommand{\btyo}{\mathbf{y}_i^{\textrm{obs}}}
\newcommand{\btym}{\mathbf{y}_i^{\textrm{miss}}}
\newcommand{\btY}{\mathbf{Y}}
\newcommand{\btC}{\mathbf{C}}
\newcommand{\blambda}{{\lambda}}
\newcommand{\bpi}{{\pi}}
\newcommand{\bmu}{{\mu}}
\newcommand{\bpsi}{{\psi}}
\newcommand{\btheta}{\theta }
\newcommand{\PP}{\mathbb{P}}
\newcommand{\argmax}[1]{\arg\underset{#1}{\max}\ }
\newcommand{\ie}{\textit{i}.\textit{e}., }
\begin{document}

\title{Model-based Clustering \\
with Missing Not At Random Data}

%%=============================================================%%
%% Prefix	-> \pfx{Dr}
%% GivenName	-> \fnm{Joergen W.}
%% Particle	-> \spfx{van der} -> surname prefix
%% FamilyName	-> \sur{Ploeg}
%% Suffix	-> \sfx{IV}
%% NatureName	-> \tanm{Poet Laureate} -> Title after name
%% Degrees	-> \dgr{MSc, PhD}
%% \author*[1,2]{\pfx{Dr} \fnm{Joergen W.} \spfx{van der} \sur{Ploeg} \sfx{IV} \tanm{Poet Laureate} 
%%                 \dgr{MSc, PhD}}\email{iauthor@gmail.com}
%%=============================================================%%

\author{Aude Sportisse, Matthieu Marbac, Fabien Laporte, Gilles Celeux,  Claire Boyer,  Julie Josse, Christophe Biernacki}

%\author*[1,2]{\fnm{First} \sur{Author}}\email{iauthor@gmail.com}

%\author[2,3]{\fnm{Second} \sur{Author}}\email{iiauthor@gmail.com}
%\equalcont{These authors contributed equally to this work.}

%\author[1,2]{\fnm{Third} \sur{Author}}\email{iiiauthor@gmail.com}
%\equalcont{These authors contributed equally to this work.}

%\affil*[1]{\orgdiv{Department}, \orgname{Organization}, \orgaddress{\street{Street}, \city{City}, \postcode{100190}, \state{State}, \country{Country}}}

%\affil[2]{\orgdiv{Department}, \orgname{Organization}, \orgaddress{\street{Street}, \city{City}, \postcode{10587}, \state{State}, \country{Country}}}

%\affil[3]{\orgdiv{Department}, \orgname{Organization}, \orgaddress{\street{Street}, \city{City}, \postcode{610101}, \state{State}, \country{Country}}}

%%==================================%%
%% sample for unstructured abstract %%
%%==================================%%

\abstract{

{Model-based unsupervised learning, as any learning task, stalls as soon as missing data occurs. This is even more true when the missing data are informative, or said missing not at random (MNAR). In this paper, we propose model-based clustering algorithms designed to handle very general types of missing data, including MNAR data.}
To do so, we introduce a mixture model for different types of data (continuous, count, categorical and mixed) to jointly model the data distribution and the MNAR mechanism, remaining vigilant to the relative degrees of freedom of each. Several MNAR models are discussed, for which the cause of the missingness can depend on both the values of the missing variable themselves and on the class membership. However, we focus on a specific MNAR model, called MNAR$z$, for which the missingness only depends on the class membership. We first underline its ease of estimation, by showing that the statistical inference can be carried out on the data matrix concatenated with the missing mask considering finally a standard MAR mechanism.
Consequently, we propose to perform clustering using the Expectation Maximization algorithm, specially developed for this simplified reinterpretation.
Finally, {we assess the numerical performances of the proposed methods } on synthetic data and on the real medical registry TraumaBase$^{\mbox{\normalsize{\textregistered}}}$ as well.

}

\keywords{Model-based Clustering, Informative Missing Values, EM and Stochastic EM Algorithms, Medical Data}

%%\pacs[JEL Classification]{D8, H51}

%%\pacs[MSC Classification]{35A01, 65L10, 65L12, 65L20, 65L70}

\maketitle

%\tableofcontents

\section{Introduction}\label{sec_intro}

%Clustering remains a pivotal tool for readable analysis of large datasets, offering a summary of datasets {by grouping observations}. In particular, the model-based paradigm \citep{mclachlan1988mixture,bouveyron2019model} allows to perform clustering, {by providing interpretable models that are valuable to understand the connections between the constructed clusters and the features in play}. {This parametric framework provides a certain plasticity by handling high-dimensionality problems \citep{bouveyron2007high,bouveyron2014model}, mixed datasets \citep{marbac2017model}, or even time series and dependent data \citep{ramoni2002bayesian,xiong2004time}.} The counterpart to performing this multifaceted model-based clustering is the modeling work involved to design mixture models appropriate to the data structure.

Clustering remains a crucial tool for the comprehensive analysis of large datasets, providing a concise summary through the grouping of observations. Notably, the model-based paradigm \citep{mclachlan1988mixture, bouveyron2019model} facilitates clustering by yielding interpretable models that enhance our understanding of the relationships between the formed clusters and the features at play. This parametric framework exhibits flexibility in handling high-dimensionality problems \citep{bouveyron2007high, bouveyron2014model}, mixed datasets \citep{marbac2017model}, and even time series and dependent data \citep{ramoni2002bayesian, xiong2004time}.
However, the challenge accompanying this multifaceted model-based clustering lies in the requisite modeling efforts to design mixture models tailored to the data structure.

In large-scale data analysis, the problem of missing data is ubiquitous, data collection being never perfect (\textit{e.g.} machines which fail, non-responses in a study). Classical approaches for dealing with missing data consist of {working on a} complete dataset \citep{little2019statistical}, either by using only complete individuals, or by imputing missing values. However, both methods can cause huge problems in the analysis, either by reducing too drastically the dataset to a possibly biased subsample, or by distorting the distribution of the completed samples, respectively.
%Both strategies are, moreover, only preprocessing steps, not specifically designed for the final clustering task. Alternatively, one can consider likelihood approaches, using, for example, Expectation Maximization (EM) type algorithms \citep{dempster1977maximum}. We {adopt} such an approach in this paper, {to make model-based clustering handle informative missing data} in an efficient way. 
Furthermore, it's crucial to note that both of these strategies are essentially preprocessing steps and are not specifically tailored for the final clustering task. Alternatively, one can explore likelihood-based approaches, employing methods like Expectation Maximization (EM) algorithms \citep{dempster1977maximum}. In this paper, we adopt such an approach to enable model-based clustering to effectively handle informative missing data in an efficient manner.

{We assume that the missing data are missing not at random (MNAR) \citep{rubin1976inference,ibrahim2001missing,mohan2018estimation}. More specifically, we consider that the cause of the missingness can be explained by the membership to a class, which is not observed, and we call this specific MNAR model MNAR$z$. As the MNAR mechanism is neither ignorable for the density estimation (parameters estimation), nor for the clustering (partition estimation), dealing with such data does require the specific modeling effort for the distribution of the missing-data pattern, indicating where are the missing values in the data. An example of MNAR data includes clinical data collected in emergency situations, where doctors may choose to treat patients suffering from severe trauma before taking measurements. This intrinsically leads to more missing data in this class.}

\paragraph{Related works on clustering despite missing values}

{In order to handle missing values in a model-based clustering framework,} \citet{HuntJorgensen2003} implement the standard EM algorithm \citep{dempster1977maximum} based on the observed likelihood and \citet{serafini2020handling} propose to perform multiple imputations (with Monte Carlo methods) in the E-step. However, both works only consider MCAR data, when the cause of the missingness is completely independent from the data values. 

Different clustering methods have been developed to deal with MNAR mechanisms. {In a partition-based framework,} \citet{chi2016k} propose an extension of $k$-means clustering for missing data, called $k$-Pod, without requiring the missing-data pattern to be modelled. However, like $k$-means clustering, the $k$-Pod algorithm relies on strong assumptions as equal proportions between clusters. \citet{de2020clustering} perform clustering via a semiparametric mixture model using the pattern-mixture approach to formulate the joint distribution, which makes the method not suitable for estimating the density parameters or imputing missing values. 
For longitudinal data,   \citet{beunckens2008latent} and \citet{kuha2018latent} jointly model the measurements and the dropout process by using an extension of the shared-parameter model.%, which is a specific approach to deal with MNAR mechanisms, by assuming that both the data and the dropout process depend on shared latent variables. 

%For MNAR data, {beyond the clustering task,} the main challenge to overcome consists in proving the identifiability of the parameters of the data and the missing-data pattern distributions. In particular, \citet{Molenberghs08} prove that identifiability does not hold when the models are not fixed, \ie when there is no prior information on the type of distribution for the missing-data pattern. 
%For fixed models, \citet{miao2016identifiability} provide identifiability results of Gaussian mixture and t-mixture models with MNAR data. However, their identifiability results are restricted to specific missing scenarios in a univariate case (one variable), and no estimation strategy is proposed. 

\paragraph{Contributions} 
We first present the considered MNAR$z$ mechanism, in which the missingness depends on class membership, within the context of unsupervised classification based on mixture models for different types of data (continuous, count, categorical, and mixed). We then demonstrate that, under MNAR$z$, statistical inference can be conducted on the augmented matrix formed from the concatenation of the data matrix and the missing-data pattern (binary mask indicating where are the missing values) by considering a missing at random (MAR) mechanism instead. In other words, the cause of the missingness can be
explained by the observed variables, making it ignorable.  This offers a significant advantage, as there is no need to model the missing-data mechanism in such cases. This also provides theoretical insights into an approach commonly used in practice without theoretical foundations, wherein working on the augmented data matrix under a MAR assumption is usually suggested to facilitate efficient learning, despite a more complex underlying missing mechanism \citep{josse2019consistency}.
%Furthermore, we provide identifiability of the mixture model parameters and missingness process parameters under certain conditions, including the data type and the link functions governing the missingness mechanism distribution. 
We then propose an EM algorithm, which has been implemented and is available for reproducibility\footnote{The code is available on \url{https://github.com/AudeSportisse/Clustering-MNAR/tree/main}.}. 
More general MNAR models are also discussed, for which the cause of the missingness can depend on the values of the missing variable themselves, as well as identifiability and estimation strategies. However, they are empirically shown to be well apprehended by the more simple MNARz model. 
%A section is then dedicated to more general variants of MNAR$z$, including cases where the cause of the missingness can also depend on the values of the missing variable themselves. 
Finally, we assess the numerical performance of our method on synthetic data and the real medical registry TraumaBase$^{\mbox{\normalsize{\textregistered}}}$.

\section{Missing-data in model-based clustering}\label{sec_setting}

\label{sec:MBC}

%\subsection{MNAR data}

Set the dataset $Y=(\ty_1 |\ldots | \ty_n)^T$ consisting of $n$ individuals, where each observation $\ty_i=(y_{i1},\ldots,y_{id})^T$ belongs to a space $\mathcal{Y}$, depending on the type of data, defined by $d$ features. 
The pattern of missing data is denoted by $C = (\tc_1 | \ldots | \tc_n)^T\in \{0 ,1\}^{n\times d}$, with $\tc_i=(\tc_{i1},\ldots,\tc_{id})^T\in\{0,1\}^d$: $\tc_{ij}=1$ indicates that the value $y_{ij}$ is missing and $\tc_{ij}=0$ otherwise. The values of the observed (resp.\ missing) variables for individual $i$ are denoted by $\ty_i^{\mathrm{obs}}$ (resp.\ $\ty_i^{\mathrm{mis}}$). {The objective of clustering is to estimate an unknown partition $Z=(\tz_1 |\ldots |\tz_n)^T \in \{0 ,1\}^{n\times K}$ that groups the full dataset $Y$ into $K$ classes, with $\tz_i=(z_{i1},\ldots,z_{iK})^T\in\{0,1\}^K$ and where $z_{ik}=1$ if $\ty_i$ belongs to cluster $k$, $z_{ik}=0$ otherwise. Consequently, in a clustering context, the missing data are not only the values $\ty^{\mathrm{mis}}_i$ but also the partition labels~$\tz_i$.}

\subsection{Mixture models}

Mixture models allow for clustering by modeling the distribution of the observed data $(\btyo,\tc_i)$. 
{Assuming an underlying mixture model with $K$ components, } the probability distribution function (pdf) of the couple $(\ty_i,\tc_i)$ reads as
\begin{equation}\label{eq:mixture}
f(\ty_i,\tc_i;\btheta) = \sum_{k=1}^K \pi_k f_k(\ty_i;\blambda_k)f_k(\tc_i\mid\ty_i;\psi_k),
\end{equation}
where  $\theta=(\gamma,\psi)$ gathers all the model parameters, $\gamma=(\pi,\blambda)$  groups the parameters related to the marginal distribution of $\ty_i$, $\pi=(\pi_1,\ldots,\pi_K)$ is the vector of proportions with $\sum_{k=1}^K \pi_k=1$ and $\pi_k>0$ for all $k\in\{1,\ldots,K\}$.
Given $\blambda=(\blambda_1,\ldots,\blambda_K)$, $f_k(\cdot \, ;\blambda_k)$ is the pdf of the $k$-th component parameterized by $\blambda_k$, $\psi=(\psi_1,\ldots,\psi_K)$ groups the parameters of the missingness mechanisms and $f_k(\tc_i\mid\ty_i;\psi_k)$ is the pdf related to the missingness mechanism under component $k$ (\ie $f_k(\tc_i\mid\ty_i;\psi_k)=f(\tc_i\mid\ty_i,z_{ik}=1;\psi_k)$). In many cases, the parameter $\psi$ is interpreted as a nuisance parameter. However, when the mechanism is not ignorable, i.e.\ can not be ignored when performing inferences for $\lambda$, we need to consider the whole parameter $\theta$ to achieve  clustering, since the pdf of the observed data is
\begin{equation}\label{eq:pdfobserved}
f(\btyo,\tc_i;\btheta) = \int f(\ty_i,\tc_i;\btheta)d\btym.
\end{equation}

Different types of pdf $f_k(\cdot \, ;\blambda_k)$ can be considered, depending on the types of features at hand.
Thus, if $\ty_i$ is a vector of continuous variables, the pdf of a $d$-variate Gaussian distribution \citep{mclachlan1988mixture,banfield1993model} can be considered for $f_k(\ty_i;\blambda_k)$ and thus $\blambda_k$ groups the mean vector and the covariance matrix. 
Moreover, if some components of $\ty_i$ are discrete or categorical, the latent class model (see \cite{geweke1994alternative,mcparland2016model}) defining $f_k(\ty_i;\blambda_k)=\prod_{j=1}^d f_{kj}(y_{ij};\blambda_{kj})$ can be used, with $\lambda_k=(\lambda_{k1},\dots,\lambda_{kd})$. In such a case, $f_{kj}$ could be the pdf of a Poisson (resp.~multinomial) distribution with parameter $\lambda_{kj}$ if $y_{ij}$ is an integer (resp.~categorical) variable. {The choice of the modeling for the missingness mechanism (\ie the distribution $f_k(\tc_i\mid\ty_i;\psi_k)$) is discussed in the following.}

To formulate the joint distribution of the data and the missing-data pattern, we consider in this paper the {selection model} \citep{heckman1979sample}, which factorizes it into the product of the marginal data density and the missing-data mechanism \eqref{eq:mixture}. This approach has the great advantage of allowing imputation of the missing values and density estimation throughout the parameter estimation of the mixture model. Another approach, called the pattern-mixture model \citep{little1993pattern}, can be used, involving the product of the marginal density of $C$ and the conditional density of $Y$ given $C$; it has been considered by \citep{du2023clustering} 
for a clustering purpose.

\subsection{The MNAR$z$ model}\label{sec:MNARz}

To handle MNAR data in selection models, the distribution of the missing-data pattern given the data and the partition should be specified. We consider that the elements of $\tc_i$ are conditionally independent given $(\ty_i,\tz_i)$. By the categorical nature of the mask $\tc_i$, this independence assumption is a quite natural hypothesis in the context of clustering \citep{de2020clustering,chi2016k}. 

In the MNAR$z$ model, we consider that the only effect of
missingness is on the class membership $k$, being the same for all variables. More specifically, the conditional distribution of $c_{ij}$ given $(\ty_i,\tz_i)$ is assumed to be a generalized linear model with link function $\rho$, so that finally
\begin{equation}\label{eq:MNARz}
f_k(\tc_i\mid\ty_i;\psi_k) = \prod_{j=1}^d \rho(\alpha_{k})^{c_{ij}}\left(1-\rho(\alpha_{k})\right)^{1-c_{ij}},
\end{equation}
where $\psi_k=\alpha_k$, in this case. The MNAR$z$ model is the simplest of the MNAR models we can propose (see Section \ref{sec_beyondMNARz} for more general ones). Roughly speaking, this model assumes that the proportion of missing values can vary among the clusters. Although MNAR$z$ does not directly involve  $\ty_i$ in its ground definition~(\ref{eq:MNARz}), it does not mean that the pattern $\tc_i$ does not depend on $\ty_i$ since $\tz_i$ depends itself on $\ty_i$; see Figure~\ref{fig:MNARz.y} for an illustration. %and this can be theoretically observed by noting through the expression $$\PP(\tc_i \mid \ty_i;\btheta) = \sum_{k=1}^K \PP(\tc_i \mid z_{ik}=1;\bpsi_k)\PP(z_{ik}=1 \mid \ty_i;\blambda_k)\neq \PP(\tc_i;\btheta).$$
\begin{center}
\begin{figure}
%\centerline{\includegraphics[scale=0.5]{MNARlatent}}
\centerline{\includegraphics[scale=0.4]{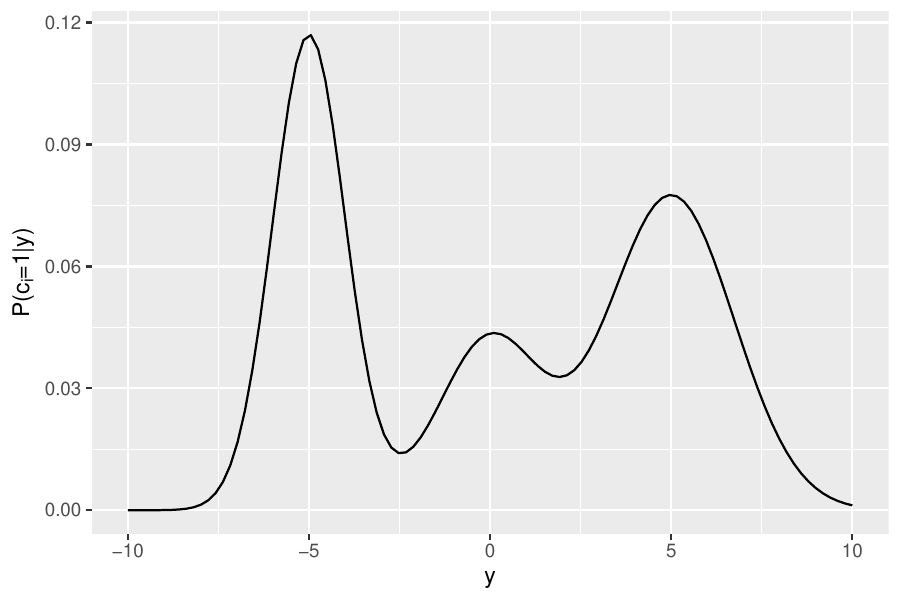}}
\caption{Illustration of dependency between $\tc_i$ and $\ty_i$ in a MNAR$z$ model  by drawing $\PP(c_{i} \mid \ty_i ; \pi,\lambda,\bpsi)$ for
a three component univariate Gaussian model with mixing proportions $\pi_1=\pi_2=0.3$ and $\pi_3=0.4$, with centers $\mu_1=\mu_3=-5$ and $\mu_2=0$, and with variances $\sigma_k^2=k$ ($k\in\{1,2,3\}$). The MNAR$z$ parameters are fixed to $\alpha_1=2$, $\alpha_2=0$ and $\alpha_3=1$.}\label{fig:MNARz.y}
\end{figure}
\end{center}

\section{Proposal}

\subsection{Reinterpretation of the MNAR$z$ model as a MAR strategy}\label{sec_theory}

%\subsection{Reinterpretation of the MNAR$z$ model as a MAR strategy}

Interestingly, the MNAR$z$ model can be turned into a MAR-like one by working on the augmented matrix formed from the concatenation of the data matrix and the missing-
data pattern. %, which is a practice commonly encountered in the machine learning community \citep{josse2019consistency}, \replace{}{without necessarily leaning on theoretical foundations.}
{This is the purpose of the next theorem, proven in  Appendix~\ref{app:proofMAR}.}
% In Theorem~\ref{propo:MAR.MNARz}, we prove that the mixture model associated to this augmented dataset $\tilde \btY^{\mathrm{obs}}$ 
% with a MAR missing mechanism is equivalent to the mixture model for $\btY^{\mathrm{obs}}$ given in \eqref{eq:mixture} assuming a MNAR$z$ or MNAR$\tzj$ model for $\btC$. 
%The proof of this proposition is given in
%Appendix~\ref{un}. 
%For simplicity this proposition is particularized to maximum likelihood estimate, but it could be easily generalized to a large family of other relevant estimation strategies.

\begin{theorem}\label{propo:MAR.MNARz}
Consider the augmented dataset $(\tilde \ty_1^{\mathrm{obs}},\dots,\tilde \ty_n^{\mathrm{obs}})$, $\tilde\ty_i^{\mathrm{obs}}=(\ty_i^{\mathrm{obs}},\tc_i)$ for $i \in \{1,\dots,n\}$. Assume that all $\tilde\ty_i^{\mathrm{obs}}$ arise i.i.d.\ from the mixture model with a MAR mechanism 
\begin{equation} 
\tilde f(\tilde \ty_i^\mathrm{obs}; \btheta) = \sum_{k=1}^K \pi_k f_k(\ty_i^\mathrm{obs};\blambda_k) \prod_{j=1}^d \rho(\alpha_{k})^{c_{ij}} \left(1-\rho(\alpha_{k})\right)^{1- c_{ij}}. \label{eq:mixture2}
\end{equation}
Then for fixed parameter $\btheta=(\pi,\lambda,\bpsi)$, the mixture model for $\tilde \ty_i^\mathrm{obs}$ is the same as the distribution for $\ty_i^\mathrm{obs}$ with the mixture model~(\ref{eq:mixture}) under the MNAR$z$ mechanism~(\ref{eq:MNARz}). 
\end{theorem}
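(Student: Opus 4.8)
The plan is to reduce the observed-data density~\eqref{eq:pdfobserved} under the MNAR$z$/MNAR$\tzj$ constraint directly to the augmented mixture~\eqref{eq:mixture2}, the crucial point being that imposing $\beta_{kj}=0$ decouples the missingness mechanism from the missing values. First I would specialize the general mechanism~\eqref{eq:meca} to the case $\beta_{kj}=0$ for all $(k,j)$, which holds under both~\eqref{eq:MNARz} and~\eqref{eq:MNARzj}. The factor $\rho(\alpha_{kj}+\beta_{kj}y_{ij})$ then collapses to $\rho(\alpha_{kj})$, so that
\begin{equation*}
f_k(\tc_i\mid\ty_i;\psi_k)=\prod_{j=1}^d \rho(\alpha_{kj})^{c_{ij}}\left(1-\rho(\alpha_{kj})\right)^{1-c_{ij}},
\end{equation*}
which no longer depends on $\ty_i$: within class $k$, the mask $\tc_i$ is independent of $\ty_i$.

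Next I would substitute this expression into the complete-data density~\eqref{eq:mixture} and integrate over the missing coordinates as prescribed by~\eqref{eq:pdfobserved}. Since the Bernoulli product above is constant in $\btym$, I can exchange the finite sum with the integral by linearity and pull that product outside, so that the remaining integral is $\int f_k(\ty_i;\blambda_k)\, d\btym = f_k(\btyo;\blambda_k)$, the marginal density of the observed coordinates under component $k$. Collecting the factors then yields
\begin{equation*}
f(\btyo,\tc_i;\btheta)=\sum_{k=1}^K \pi_k\, f_k(\btyo;\blambda_k)\prod_{j=1}^d \rho(\alpha_{kj})^{c_{ij}}\left(1-\rho(\alpha_{kj})\right)^{1-c_{ij}},
\end{equation*}
which is exactly the right-hand side of~\eqref{eq:mixture2} evaluated at $\tilde\ty_i^{\mathrm{obs}}=(\btyo,\tc_i)$. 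This establishes the equality of the two densities for every fixed $(\pi,\lambda,\bpsi)$, which is the claim.

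There is no serious analytic obstacle: the whole argument rests on the single observation that $\beta_{kj}=0$ removes the dependence on the missing values, after which the mechanism factors out of the integral. The only points requiring care are the correct identification of the marginal $f_k(\btyo;\blambda_k)$ in the two regimes considered in the paper — in the latent class case $f_k$ is a product over coordinates, so the marginal is simply the product over the observed ones, whereas in the Gaussian case it is the density of the corresponding observed sub-vector — and the fact that the set of coordinates entering $\btyo$ itself depends on $\tc_i$, which is harmless since $\tilde\ty_i^{\mathrm{obs}}=(\btyo,\tc_i)$ is treated as a single object. The more delicate aspect is interpretive rather than computational, namely justifying why~\eqref{eq:mixture2} deserves to be called a MAR model: in the augmented problem the mask $C$ is regarded as fully observed data whose within-class law is the Bernoulli product, so that no missingness mechanism on $C$ needs to be modelled and the original MNAR$z$/MNAR$\tzj$ mechanism becomes ignorable.
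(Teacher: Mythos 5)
Your proposal is correct and rests on the same key observation as the paper's proof: under MNAR$z$/MNAR$\tzj$ the mechanism factor $\prod_j \rho(\alpha_{kj})^{c_{ij}}(1-\rho(\alpha_{kj}))^{1-c_{ij}}$ is constant in $\ty_i^{\mathrm{mis}}$, so it factors out of the integral in~\eqref{eq:pdfobserved} and the observed-data density collapses to~\eqref{eq:mixture2}. The paper merely packages the same computation in the opposite direction, starting from the augmented MAR model and factorizing its likelihood into a nuisance part for the (trivially observed) augmented mask and the part~\eqref{eq:mixture2}, a point you address informally in your closing paragraph.
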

\medskip

Theorem \ref{propo:MAR.MNARz} implies that the maximum likelihood estimate of $\btheta$ is the same considering $\tilde{\ty}_i^{\mathrm{obs}}$ under the MAR assumption and $\ty_i^{\mathrm{obs}}$ under the MNAR$z$ assumption~(\ref{eq:MNARz}).
This implies that if the mechanism is MNAR$z$, an (EM) algorithm designed for MAR data can be used on the augmented data set instead, capitalizing on efficient implementations dedicated to such a well-studied setting (see Section \ref{sec:simu}). 
In fact, Theorem \ref{propo:MAR.MNARz} is the first theoretical result in unsupervised learning in line with the intui\-tion developed in \citep{josse2019consistency}
for supervised learning and in \citep{sportisse2020imputation} for estimation in low-rank models, that working with MAR strategies on the data set augmented by the missing pattern can actually tackle certain types of MNAR settings. 

Furthermore, the identifiability of the model parameters when considering MNAR$z$ data follows directly from this reinterpretation as a MAR model. Indeed, identifiability in the complete case implies identifiability when MCAR or MAR values occur.

\subsection{Associated EM algorithm}\label{sec_estimation}

{Assuming identifiability, we estimate parameters via likelihood maximization using}  the EM algorithm specifically designed for Gaussian, Poisson, multinomial and mixed data with MNAR data. Details of the algorithm are given in Appendix \ref{app:details_algo}. {Assuming that the number $K$ of clusters is known (its choice in practice is discussed in Section \ref{sec:simu}) and that the samples $(y_i,z_i,c_i)_{i=1,\dots,n}$ are i.i.d., the complete-data log-likelihood can be written as}
\begin{equation}
%\begin{split}
\ell_{\mbox{\tiny comp}}(\theta;Y,Z,C) = \sum_{i=1}^n \sum_{k=1}^K z_{ik}\log\left(\pi_k f_k(\ty_i;\blambda_k)f_k(\tc_i\mid \ty_i;\bpsi_k)\right).
\label{LLComp}
%&=\sum_{i=1}^n \sum_{k=1}^K z_{ik}\left[\log(\pi_k f_k(\ty_i;\btheta) + \log(\PP(\tc_i\mid \ty_i,z_{ik}=1;\bpsi))\right]
%\end{split}
\end{equation}

The EM algorithm \citep{dempster1977maximum} is an iterative algorithm that permits to maximize the likelihood function under missingness. Initialized at the point $\theta^{[0]}$, its iteration $[r]$ consists, at the E-step, in computing the expectation of the complete-data log-likelihood $Q(\theta;\theta^{[r-1]})=\mathbb{E}_{\theta^{[r-1]}}\left[\ell_{\mbox{\tiny comp}}(\theta;Y,Z,C)\mid \btY^{\mathrm{obs}}, \btC \right]$, then, at the M-step, updating the parameters by maximizing this function $\theta^{[r]}=\argmax\theta Q(\theta;\theta^{[r-1]})$. Note that 
$$
Q(\theta;\theta^{[r-1]})=\sum_{i=1}^n \sum_{k=1}^K t_{ik}(\theta^{[r-1]})\left[ \log(\pi_k) +  \tau_y(\lambda_k;\ty_i^{\mathrm{obs}},\tc_i,\theta^{[r-1]}) +  \tau_c(\psi_k;\ty_i^{\mathrm{obs}},\tc_i,\theta^{[r-1]}) \right],
$$
\begin{align*}
    \textrm{where} \quad &t_{ik}(\theta^{[r-1]})=\frac{1}{f(\btyo,\tc_i;\btheta^{[r-1]})}\int \pi_k^{[r-1]} f_k(\ty_i;\blambda_k^{[r-1]})f_k(\tc_i\mid\ty_i;\psi_k^{[r-1]})d\btym, \\
    &\tau_y(\lambda_k;\ty_i^{\mathrm{obs}},\tc_i,\theta^{[r-1]})=\mathbb{E}_{\theta^{[r-1]}}\left[\log f_k(y_i;\lambda_k)\mid \ty_i^{\mathrm{obs}},\tc_i,z_{ik}=1\right], \\
    &\tau_c(\psi_k;\ty_i^{\mathrm{obs}},\tc_i,
    \theta^{[r-1]})=\mathbb{E}_{\theta^{[r-1]}}\left[\log f_k(c_i\mid y_i;\bpsi_k)\mid \ty_i^{\mathrm{obs}},\tc_i,z_{ik}=1\right].
\end{align*}

 %\jj{le mot em a deja ete utilisé mais la ref que la?}\as{non je la remets juste ici encore?} 
%consists of iterating the following two steps, starting from an initial parameter value $(\pi^0,\btheta^0,\bpsi^0)$ and  until a stopping criterion is met (e.g.\ a given maximum iteration value $r\leq r_{\mbox{\tiny max}}$): the algorithm alternates between the 
Thus, the iteration $[r]$ of the EM algorithm is defined by
\begin{itemize}
\item {\bf E-step:} Computation of 
$t_{ik}(\theta^{[r-1]}),  \tau_y(\lambda_k;\ty_i^{\mathrm{obs}},\tc_i,\theta^{[r-1]}) \text{ and } \tau_c(\psi_k;\ty_i^{\mathrm{obs}},\tc_i,\theta^{[r-1]}).$
%$Q(\pi,\btheta,\bpsi;\pi^r,\btheta^r,\bpsi^r)$ which is the expected complete log-likelihood $\ell_{\mbox{\tiny comp}}$ knowing the observed data and a current value of the parameters. %This quantity can be decomposed into two parts (see Appendix \ref{app:details_algo} for the full computation) as follows
%with
%\begin{align}
%\label{eq:qy}
%Q_\ty(\pi,\btheta;\pi^r,\btheta^r) &= \sum_{i=1}^n \sum_{k=1}^K (\tau_{ik})^r \log(\pi_k) + \sum_{i=1}^n \sum_{k=1}^K (\tau_{ik})^r E_{i\ty}^r(\btheta), \\
%\label{eq:qc}
%Q_\tc(\bpsi;\bpsi^r) &=\sum_{i=1}^n \sum_{k=1}^K (\tau_{ik})^r E_{i\tc}^r(\bpsi),
%\end{align}
%where for $i=1,\ldots,n$ and $k=1,\ldots,K$,
%\begin{eqnarray}
%\label{eq:eiy}
%E_{i\ty}^r(\btheta)&=&\mathbb{E}\left[\log( f_k(\ty_i;\btheta_k))\mid \ty_i^{\mathrm{obs}},{z_{ik}=1},\tc_i;\btheta^r,\bpsi^r\right], \\
%\label{eq:eic}
%E_{i\tc}^r(\bpsi)&=&\mathbb{E}\left[\log(\PP(\tc_i\mid \ty_i,z_{ik}=1;\bpsi))\mid \ty_i^{\mathrm{obs}},{z_{ik}=1},\tc_i;\btheta^r,\bpsi^r\right], \\
%\label{eq:tauik}
%(\tau_{ik})^r &=& \PP(z_{ik}=1 \mid  \ty_i^{\mathrm{obs}},\tc_i;\btheta^r,\bpsi^r,\pi^r) \;\; \propto \;\; \pi^r_k f_k(\ty_i^{\mathrm{obs}};\theta^r_k)\PP(\tc_i\mid \ty_i^{\mathrm{obs}},z_{ik}=1;\bpsi^r).
%\end{eqnarray}

\item {\bf M-step:} Updating the parameters 
\begin{align*}
    \blambda_k^{[r]}&=\argmax {\lambda_k} \sum_{i=1}^nt_{ik}(\theta^{[r-1]})\tau_y(\lambda_k;\ty_i^{\mathrm{obs}},\tc_i,\theta^{[r-1]}) \\
    \pi^{[r]}_k&=\frac{1}{n}\sum_{i=1}^n t_{ik}(\theta^{[r-1]}) \\
    \bpsi_k^{[r]}&=\argmax {\bpsi_k} \sum_{i=1}^nt_{ik}(\theta^{[r-1]}) \tau_c(\psi_k;\ty_i^{\mathrm{obs}},\tc_i,\theta^{[r-1]}).
\end{align*}
%Maximization over $\pi$, $\btheta$ and $\bpsi$ of $Q(\pi,\btheta,\bpsi;\pi^r,\btheta^r,\bpsi^r)$, by respectively maximizing  $Q_\ty(\pi,\btheta;\pi^r,\btheta^r)$ w.r.t.\ $(\pi,\btheta)$ and $Q_\tc(\bpsi;\bpsi^r)$  w.r.t.\ $\bpsi$.  This step leads to the parameters $\pi^{r+1}$, $\btheta^{r+1}$ and $\bpsi^{r+1}$.
\end{itemize}

{The E-step requires to be able to integrate the distribution of $\ty_i^\mathrm{mis}$ given $(\ty_i^\mathrm{obs}, z_{ik}=1, \tc_i)$ 
%\cb{pourquoi $z_{ik}=1$?}
and the M-step requires to maximize the resulting function. 
This is straightforward under the MNAR$z$ model, because the effect of the missingness does not depend on $\bty$ (see Appendix \ref{app:EMgauss} for computation details in the case of Gaussian or categorical data). %However, these steps become a delicate issue, when the missingness depends on variables $y$,  such models being generically denoted by MNAR$\tyall$ in the sequel. The computation of $\tau_y$ in particular remains feasible in some cases (for instance when the link function in \eqref{eq:meca} is probit), while to our knowledge, $\tau_c$ and $t_{ik}$ do not admit closed forms.
}

% The difficulty of calculating the quantities defined in the E-step as well as the difficulty of the maximization problem in the M-step, depend on the involved MNAR model. These steps are straightforward with MNAR$z$ and MNAR$\tzj$ models.
% \replace{}{but become delicate when the missingness depends on $y$: such models are generically denoted MNAR$\tyall$ in the sequel. }

%Recall that MNAR$\tz$ and MNAR$\tzj$ are the only ones which guarantee identifiability of the parameters for categorical data. Consequently, for the MNAR$\tyall$ models, only algorithms for continuous or count data have to be described. 

\subsection{Beyond the MNARz model}\label{sec_beyondMNARz}

In Section \ref{sec:MNARz}, we proposed the MNAR$z$ model in \eqref{eq:MNARz}. A more general model, called MNAR$\tyk\tzj$, can be considered, when the effect of missingness is on both the class membership and the variable itself:
\begin{equation}\label{eq:meca}
f_k(\tc_i\mid\ty_i;\psi_k) = \prod_{j=1}^d \left(\rho(\alpha_{kj} + \beta_{kj} y_{ij})\right)^{c_{ij}}\left(1-\rho(\alpha_{kj} + \beta_{kj} y_{ij})\right)^{1-c_{ij}},
\end{equation}
where $\psi_k=(\alpha_{k1},\beta_{k1},\ldots,\alpha_{kK},\beta_{kK})$. The parameter  $\alpha_{kj}$ represents a mean effect of missingness on the $k$-th class membership for the variable $j$  (note that within a same class $k$, $\alpha_{kj}$ is not necessarily equal to $\alpha_{kj'}$ for $j\neq j'$). The parameter  $\beta_{kj}$ represents the direct effect of missingness on the variable $j$ which depends on the class $k$ as well.

Simpler models can also be derived from \eqref{eq:meca} by imposing equal parameters either across the class membership, or across the variables likely to be missing.  
First, we introduce three models, with a lower complexity than \eqref{eq:meca}, that still allow  the probability of being missing to depend on both the variable itself and the class membership:
\begin{itemize}
    \item \underline{MNAR$y\tzj$ model}: when $\beta_{1j}=\ldots=\beta_{Kj},\;\forall j$, the effect of missingness on a variable is the same regardless of the class (while keeping different mean effects $\alpha_{kj}$ on the class membership). 
    \item \underline{MNAR$\tyk z$ model}: when $\alpha_{k1}=\ldots=\alpha_{kd},\;\forall k$, the missingness has a same mean effect on class membership shared by all variables (while allowing different self-masked and class-wise parameters $\beta_{kj}$).
    \item \underline{MNAR$yz$ model}: when $\beta_{1j}=\ldots=\beta_{Kj},\;\forall j \quad  \mbox{ and } \quad \alpha_{k1}=\ldots=\alpha_{kd},\;\forall k$, the effects on a particular variable and on the class membership can be respectively the same for all the classes and for all the variables.
\end{itemize}

\medbreak

Secondly, the probability to be missing can also depend only on the variable itself:
\begin{itemize}
    \item \underline{MNAR$y$ model}\footnote{This is actually a particular case of MNAR mechanims, widely used in practice \citep{mohan2018handling}.}: when $\alpha_{11}=\ldots=\alpha_{1d}=\alpha_{21}=\ldots=\alpha_{Kd} \text{ and } \quad \beta_{1j}=\ldots=\beta_{Kj} \quad \forall j$, the only effect of missingness is thus on the variable $j$, being the same regardless of the class membership. 
\item \underline{MNAR$\tyk$ model}: when $\alpha_{11}=\ldots=\alpha_{1d}=\alpha_{21}=\ldots=\alpha_{Kd}$, the effect of missingness on the variable $j$ depends on the class $k$.
\end{itemize}

\medbreak

Thirdly, the probability to be missing  can also depend only on the class membership, so that the missingness is class-wise only. This is the case of the MNAR$z$ model given in \ref{eq:MNARz}, but we can also consider a slightly more general case:
\begin{itemize}
    \item \underline{MNAR$\tzj$ model}: when $\beta_{kj}=0,\; \forall (k,j)$, the effect of missingness on the class membership $k$ is not the same for all the variables.
\end{itemize}

Finally, the simplest model is the missing completely at random (MCAR) one, characterized by no dependence on variables, neither on class membership, \ie each variable has the same probability of missing, 
\begin{equation} \label{eq:MCAR}
\mbox{MCAR:} \quad   \beta_{kj}=0,\; \forall (k,j)\text{ and } \alpha_{1j}=\ldots=\alpha_{Kj},\; \forall j.
\end{equation}

For each of these MNAR models, we have studied the identifiability and have proposed a specific algorithm (EM or Stochastic EM); all the details are given in the accompanying note available \href{https://github.com/AudeSportisse/Clustering-MNAR/blob/main/Accompagnying_Note_Clustering_MNAR.pdf}{here}. Note that for MNAR data, {beyond the clustering task,} the main challenge to overcome consists in proving the identifiability of the parameters of the data and the missing-data pattern distributions \citep{Molenberghs08}.  The identifiability study
showed that the most general models lead to non-identifiable parameters for categorical data (but the identifiability holds only for the MCAR, MNAR$z$ and MNAR$\tzj$ mechanisms). 

Despite the possibility of defining a large number of MNAR models, we have chosen to focus on the MNARz mechanism, which is a good compromise,
clearly outperforming methods that do not consider MNAR data, while limiting the
computational cost of the estimation in regard of more general MNAR mechanisms. Moreover, the MNAR$z$ model is robust to model mispecification (see Section \ref{sec:simu}, Figure \ref{fig:xp2}). 

%This paper is focused on the MNAR$z$ mechanism, which is a good compromise, clearly outperforming methods that do not consider MNAR data, while limiting the computational cost of the estimation in regard of more general MNAR mechanisms (see Section \ref{sec:simu}).

\section{Numerical experiments on synthetic data}\label{sec:simu}

To assess the quality of the clustering, it is possible to use an information criterion such as the Bayesian Information Criterion (BIC) \citep{Schwarz78} or the Integrated Complete-data Likelihood (ICL) \citep{BCG00}. The BIC criterion is expected to select a relevant mixture model from a density estimation perspective, while the ICL is expected to select a relevant mixture model for a clustering purpose \citep{baudry2015estimation}. Thus, we consider the latter in the following. As the ICL involves an integral which is generally not explicit, we can use an approximate version \citep{baudry2015estimation} that we adapt with missing data. For a model $\mathcal{M}$ with $\nu_\mathcal{M}$ parameters, the ICL reads as
\begin{align*}
\mathrm{ICL}(\mathcal{M})&=\ell(\hat{\theta}_\mathcal{M}; Y^{\mathrm{obs}},C) - \frac{\nu_\mathcal{M}}{2}\log n +  \sum_{i=1}^n\sum_{k=1}^K {z}_{ik}^{\mathrm{MAP}}(\hat{\theta}_\mathcal{M})\log(\mathbb{P}(z_{ik}=1 | \ty_i^{\mathrm{obs}},\tc_i;\hat{\theta}_\mathcal{M})),
\end{align*}
where $\hat{\theta}_{\mathcal{M}}$ is a maximum likelihood estimator, $\ell(\theta;  Y^{\mathrm{obs}},C)$ is the observed log-likelihood, and with

\begin{equation}
\label{eq:zikMAP}
{z}_{ik}^{\mathrm{MAP}}(\theta) = \underset{k \in \{1,\dots,K\}}{\mathrm{argmax}} \mathbb{P}(z_{ik}=1 | \ty_i^{\mathrm{obs}},\tc_i;\theta).
\end{equation}

In addition, the Adjusted Rand Index (ARI) \citep{hubert1985comparing} can be computed between the true partition $Z$ and the estimated one.

\subsection{Leveraging from MNAR data in clustering}
\label{subsec:leveraging}
MNAR data are often considered as a real obstacle for statistical processing. 
Yet, the following numerical experiment illustrates that the MNAR mechanism may help performing the clustering task.
Let us consider a bivariate isotropic Gaussian mixture model with two components and equal mixing proportions, under the MNAR$z$ mechanism \eqref{eq:MNARz} with a probit link function. %and identity covariance matrices, i.e.\ %the observations $\btY \in \mathbb{R}^2$ follow the distribution
%$\btY \sim 0.5\mathcal{N}(\bmu_1,{I}_{2\times 2})+0.5 \mathcal{N}(\bmu_2,{I}_{2\times 2})$.
The difference between the centers of both mixture components is taken as $\Delta_\mu=\bmu_{21}-\bmu_{11}=\bmu_{22}-\bmu_{12}\in\{0.5,1,\ldots,3\}$. %where  for any cluster $ k \in \{1,2\}$, the center $\bmu_{k}=(\bmu_{k1},\bmu_{k2})$ is chosen with equal components  ($\bmu_{k2}=\bmu_{k1}$).
This cluster overlap controls the mixture separation, which can vary from a low separation ($\Delta_\mu=0.5$) to a high separation ($\Delta_\mu=3$). 
%By considering the MNAR$z$ mechanism \eqref{eq:MNARz}, %with $\rho$ the cumulative distribution function of the standard Gaussian. 
We also make the discrepancy between inter-cluster missing proportions $\Delta_{\mathrm{perc}}=|{\mathrm{perc}}_2-{\mathrm{perc}}_1|$, vary in $\{0,0.1,0.2,0.3\}$\footnote{The value $\Delta_{\mathrm{perc}}$ means that if the percentage of missing values in the first cluster is ${\mathrm{perc}}_1$, the percentage of missing values in the second cluster is ${\mathrm{perc}}_2=({\mathrm{perc}}_1+\Delta_{\mathrm{perc}})$.} corresponds to {emphasize} the MNAR evidence: indeed, $\Delta_{\mathrm{perc}}=0$ corresponds to a MCAR model, whereas a high value of $\Delta_{\mathrm{perc}}$ corresponds to a high difference of missing pattern proportions between clusters.
\begin{figure}
\centering
\includegraphics[width=0.6\textwidth]{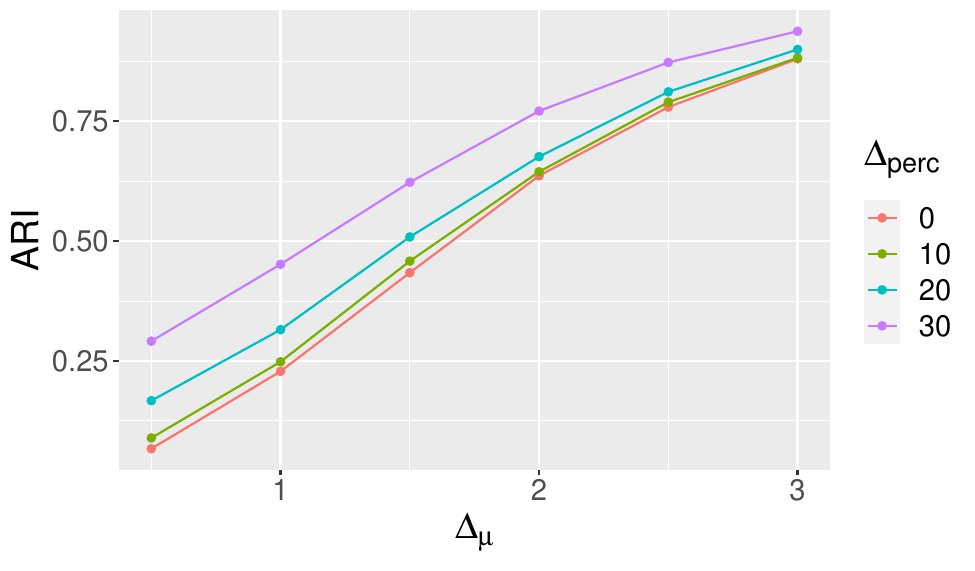}
\caption{\label{fig:Bayes} Relative effect of both the separation strength $\Delta_\mu$ of the mixture component and the MNAR evidence $\Delta_{\mathrm{perc}}$ on theoretical ARI (\textit{e.g.} if $\Delta_\mathrm{perc}=10\%$ (green line), the second class has 10\% more missing values).}
\end{figure}
 For all possible values of $(\Delta_\mu, \Delta_{\mathrm{perc}})$, 15\% missing values are introduced. Figure \ref{fig:Bayes} gives the theoretical ARI (\ie we compute the ARI with the theoretical parameters) as a function of the cluster overlap $\Delta_\mu$ and the MNAR evidence $\Delta_{\mathrm{perc}}$. Although the good classification rate is mainly influenced by center separation $\Delta_\mu$, it also increases with the MNAR evidence $\Delta_{\mathrm{perc}}$. {When classification is difficult because the mixture is not well separated ($\Delta_\mu=0.5$), the fact that the data is MNAR helps clustering:
the theoretical ARI for $\Delta_{\mathrm{perc}}=30$ (MNAR data) is significantly higher than the one for  $\Delta_{\mathrm{perc}}=0$ (MCAR data).} 
This toy example illustrates how clustering can leverage from MNAR values, rather generally considered a true hindrance for any statistical analysis.

\subsection{Generic experiments}
\label{sec:subsec:genericXP}

We consider a Gaussian mixture with three components having unequal proportions ($\pi_1=0.5$, $\pi_2=\pi_3=0.25$) and independent variables:
\begin{equation}\label{eq:simu}
\forall j\in \{1,\dots,d\}, y_{ij}=\delta\sum_{k=1}^3 \varphi_{kj}z_{ik} + \epsilon_{ij},
\end{equation}
with $\epsilon_{ij} \sim \mathcal{N}(0,1)$ the noise term, $\varphi_{k} \in \{0,1\}^d$ and $\delta>0$. {Thus, each entry $y_{ij}$ follows a Gaussian distribution with variance $1$ and mean $\delta\sum_{k=1}^3 \varphi_{kj}z_{ik}$. The values of $\varphi_{kj}$ are arbitrary chosen and highlight the interations between the variable $j$ and the class membership $k$. This formulation allows to control, in any scenario, the theoretical rate of misclassification through the value of $\delta$ (and hence the theoretical ARI).}
We introduce missing values with a MNAR model \eqref{eq:meca}, using a probit link function and control the rates of missingness through the value of $\psi_k$.  For each experiment, the values of $\delta$, $\psi$ and $\phi$ are given in Appendix \ref{app:tab}. When not specified, the simulations have been performed for a theoretical rate of misclassification of $10\%$ and a theoretical missing rate in the whole dataset of 30$\%$. 

\paragraph{Comparison of  MNAR$z$ with other MNAR settings}

We first vary the number of variables ($d=3,6,9$) and consider $n=100$ observations. The missing values are sequentially introduced with a MNAR setting. We compare the method considering the true mechanism (the one used to generate the missing values) with the EM algorithm for MCAR and MNAR$z$ values and the two-step heuristic based on \texttt{Mice}. This latter consists of first imputing the missing values using multiple imputations by chained equations \citep{buuren2010mice} to get $M$ completed datasets. Then, classical model-based clustering is performed on each completed dataset, for which the ARI is computed,
Figure \ref{fig:xp2} shows the boxplot of the ARI for each scenario. First, the methods that consider a MNAR mechanism (MNAR$\star$) always outperform those that consider the MCAR mechanism and the two-step procedure based on \texttt{Mice}. %Note that comparing the MNAR$z$ setting with the real MNAR setting that generated the missing data is difficult, because it is not clear how much the MNAR$z$ setting deviates from the hypothesis (depending on the parameters chosen for the mechanism). 
Finally, the MNAR$z$ model remains a good compromise, clearly outperforming methods that do not consider MNAR data.

\begin{figure}[h]
\centering
\includegraphics[scale=0.5]{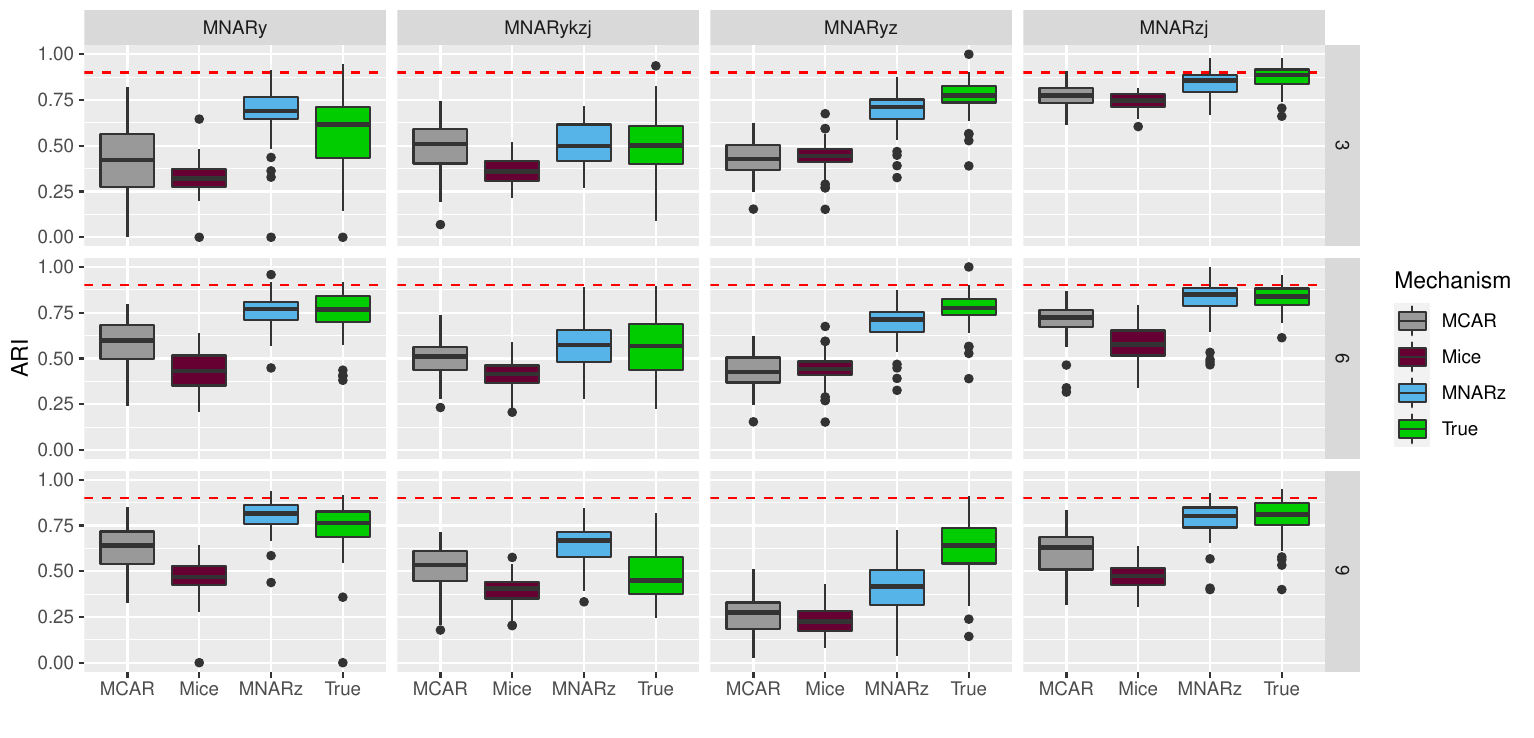}
\caption{\label{fig:xp2}Boxplot of the ARI obtained for 50 samples composed of $d = 3, 6, 9$ variables (rows) and $n=100$ observations. Missing values are introduced with MNAR$y$, MNAR$\tyk\tzj$ MNAR$yz$ or MNAR$\tzj$ settings (columns). The boxplot in green is the one for the algorithm considering the true MNAR$\star$ setting; the boxplot in blue (resp. in gray) is the one for the EM algorithm considering the MNAR$z$ setting (resp. the MCAR setting); the boxplot in red in the two-step heuristic (\texttt{Mice}). {The red dashed line indicates the theoretical ARI.}}
\end{figure}

Moreover, in Appendix \ref{sec:app_xp}, Figure \ref{fig:comptime} and \ref{fig:xp1_MNARz} show the computation times for these numerical experiments; while the MNAR models considering that the probability of being missing depends on the variable itself are computationally very costly, MNAR$z$ model clearly limits the computational cost of the
estimation. 

\paragraph{Focus on the MNAR$z$ mechanism}
Considering the setting \eqref{eq:simu} and under a MNAR$z$ mechanism, we then evaluate the impact of misspecification of the link function (Figure \ref{fig:xp34}(a)), the misspecification of the data distribution (Figure \ref{fig:xp34}(b)) and the percentage of missing values (Figure \ref{fig:xp34}(c)) by comparing the ARI for the MNAR$z$ setting and the MCAR one.

%\paragraph{Impact of the misspecification of the link function} 
In Figure \ref{fig:xp34}(a), {our algorithm always considering a probit function gives the best ARI (outperforming strategies assuming only MCAR data) regardless of the link function (Laplace distribution, logit, probit) used to introduce missing values under a MNAR$z$ model. This highlights the robustness of the MNAR$z$ setting to the link function. 
}
%the missing values are introduced using an MNAR$z$ model with different link functions (the Laplace density distribution, the logit link, and the probit link), whereas the probit one is always considered in the estimation algorithm. 
%The MNAR$z$ setting seems to be robust to the link function. 
In Figure \ref{fig:xp34}(b), we consider a three-component Gaussian mixture with non-diagonal covariance matrices. For each component, the diagonal terms of the covariance matrix are $\Sigma_{ii}=1$ and the other terms $\Sigma_{ij}=\ell, i\neq j$, with $\ell\in \{0,0.1,0.25,0.5\}$, while the algorithms assume $\ell=0$. If the EM algorithm designed for MNAR$z$ data suffers from a huge deviation ($\ell=0.5$) regarding the data distribution, it remains competitive for smaller ones ($\ell=0.1,0.25$). 
%For smaller deviations , the results are still satisfactory and clearly outperform the ones given by the EM algorithm for the MCAR setting. 
Finally, Figure \ref{fig:xp34}(c) shows the boxplots of the ARI for $10\%$, $30\%$ and $50\%$ of missing values in the entire dataset. As the percentage of missing data increases, {the gap between algorithms considering MCAR and MNAR$z$ data is widening, proving the relevancy of our algorithm %considering MNAR$z$ data 
even with high missing-data rates (50\%).} 
In Appendix \ref{sec:app_xp}, we also provide the experiments for a theoretical rate of misclassification of 15\%. Same conclusions hold.
%the difference between the algorithms considering MCAR and MNAR$z$ data is greater. 
%Even if the percentage of missing data has an impact on the algorithm considering MNAR$z$ data, it still gives results close to the theoretical ARI for a missing-data rate of 50\%. 

% \begin{figure}
%      \centering
%      \begin{subfigure}[b]{0.33\textwidth}
%          \centering
%          \includegraphics[width=0.33\textwidth]{fig/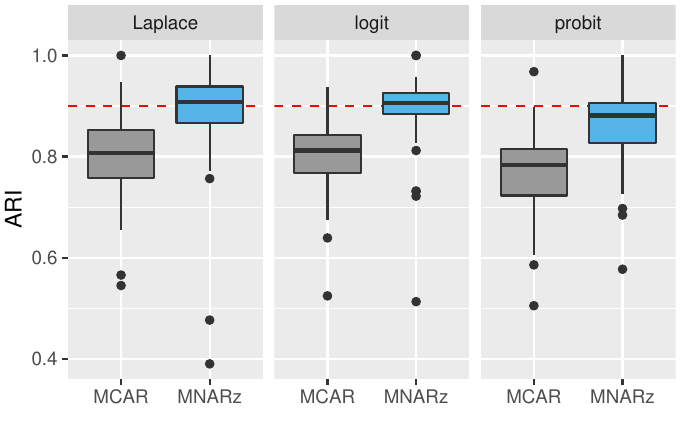}
%          \caption{Impact of the misspecification of the link function}
%          \label{fig:xp3a}
%      \end{subfigure}
%      \hfill
%      \begin{subfigure}[b]{0.33\textwidth}
%          \centering
%          \includegraphics[width=0.33\textwidth]{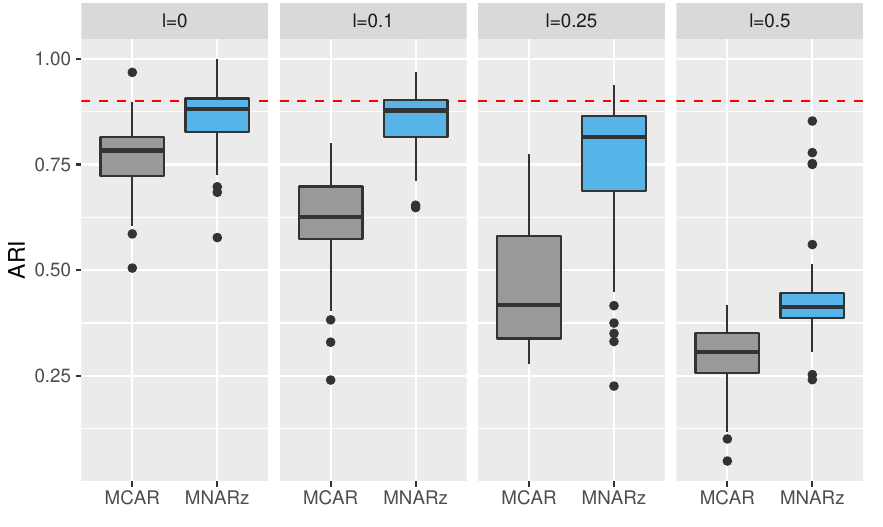}
%          \caption{Impact of the misspecification of the data distribution}
%          \label{fig:xp3b}
%      \end{subfigure}
%      \vfill
%     \begin{subfigure}[b]{0.33\textwidth}
%          \centering
%          \includegraphics[width=0.33\textwidth]{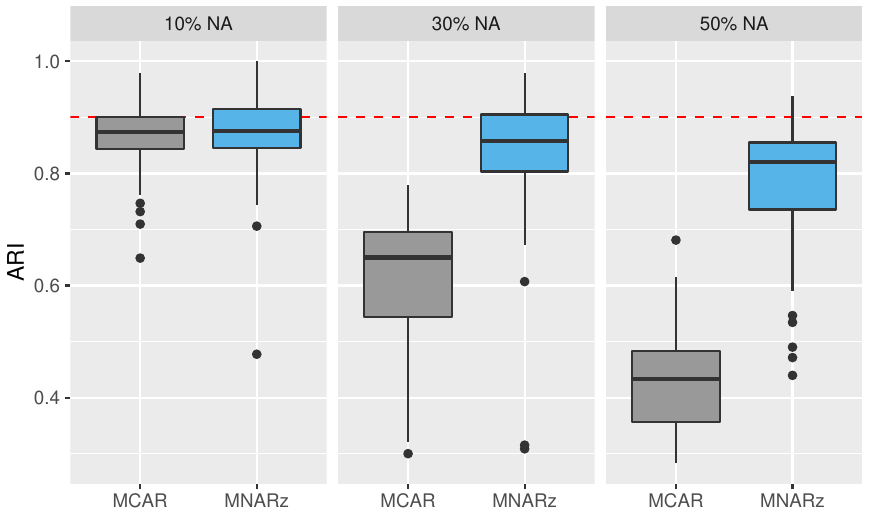}
%          \caption{Impact of the percentage of missing values }
%          \label{fig:xp4a}
%      \end{subfigure}
%         \caption{Boxplot of the ARI obtained for 50 samples composed of $d = 6$ variables. The missing values are introduced using a MNAR$z$ setting.}
%         \label{fig:three graphs}
    
% \end{figure}
\begin{figure}
     \centering
     \begin{tabular}{ccc}
     \includegraphics[width=0.3\textwidth]{Plot_xp3a.pdf} &
     \includegraphics[width=0.33\textwidth]{Plot_xp3b.pdf} &
     \includegraphics[width=0.3\textwidth]{Plot_xp4a.pdf} \\
     (a) & (b) & (c)
     \end{tabular}
        \caption{Boxplot of the ARI obtained for 50 samples of dimension $d = 6$ variables. The missing values are introduced using a MNAR$z$ setting. The red dashed line indicates the theoretical ARI.}
        \label{fig:xp34}
\end{figure}

%\paragraph{Impact of the misspecification of the data distribution} 

%\paragraph{Choice of $K$}
When the number $K$ of clusters is not known \emph{a priori}, it can be automatically chosen using the ICL criterion: the idea is to run algorithms with several values for $K$ ($K=1,2,3,4$ here), and to choose the model with the highest resulting ICL. To our knowledge, no method proposes an automatic choice of the number of clusters in unsupervised classification for the two-step heuristics, which is also a major drawback.

\begin{center}
\begin{table}
\caption{\label{tab:choiceK} Proportion of good selections of $K$ ($K=3$) using the ICL criterion for the EM algorithm, over 50 repetitions  ($d=6$).}
\begin{tabular}{l|ll|ll}
\hline
   & \multicolumn{2}{c}{MCAR}  & \multicolumn{2}{c}{MNARz}  \\ \hline
 Sample size $n$   & $100$ & $500$  & $100$ & $500$  \\ \hline
10 \% NA &  94\% & 100\% & 94\% & 100\% \\ \hline
30 \% NA  & 8\%  & 96\% & 56\% & 100\%  \\ \hline
50 \% NA  & 0\% & 0\% & 20\% & 98\% \\ \hline
\end{tabular}
\end{table}
\end{center}
Table \ref{tab:choiceK} gathers the percentages of times (over 50 repetitions) the correct number of classes ($K=3$) is chosen by the ICL criterion for different missing-data rates (10, 30, 50\%) and different sample sizes ($n=100,500$). In any case, the EM algorithm for MNAR$z$ data always outperforms the algorithm for MCAR data in terms of accurate model selection. The EM algorithm for MNAR$z$ data manages also to select the best model despite a high percentage of missing data (50\%) provided that the sample size is large enough ($n=500$).

We also illustrate in Appendix \ref{sec:app_xp} (Figure \ref{fig:xp1_MNARz}) the findings of Theorem \ref{propo:MAR.MNARz}, by comparing  our algorithm considering MCAR or MNAR$z$ data with the algorithm of the \textbf{RMixtComp} package \citep{biernacki2015model} considering MCAR data and using the augmented data matrix ($Y|C$). As expected, both approaches give similar results. 

\section{TraumaBase$^{\mbox{\normalsize{\textregistered}}}$ dataset}\label{sec_realdata}

In this section, we illustrate our approach on a public health application with the  TraumaBase$^{\mbox{\normalsize{\textregistered}}}$ Group (\url{https://www.traumabase.eu/en_US}) on the management of traumatized patients. This dataset contains 41 mixed variables (continuous, quantitative) on $8,248$ polytraumatized patients who suffer from a major trauma (injuries from cycle or car accident). Data have been collected from 15 different hospitals. In this dataset, 11\% of the data are missing and only 1.4\% of the individuals are fully observed. More information on the variables can be found in Appendix \ref{app:realdata}. The purpose of this real data analysis is twofold: (i) we want to know if considering the missingness process has an impact on the estimated partition, (ii) we compare our method with the classical imputation methods in Appendix \ref{app:realdata}.

After discussion with doctors, some variables can be considered to have MNAR values, such as \textit{Shock.index.ph}, which denotes the ratio between heart rate and systolic arterial pressure. In fact, if this rate has a value that indicates that the patient's condition is critical, doctors cannot measure heart rate or systolic arterial pressure in emergency situations. Therefore, we expect that considering a MNAR mechanism can improve the classification.

%In this section, the variables related to the patient death and also the hand-made classifications made by the doctors (one considers 3 groups, the other 4) are not taken into account for running the algorithms, as they were considered too informative for the classification. 
We compare our algorithm designed for the MNAR$z$ data \eqref{eq:MNARz} and the MCAR data \eqref{eq:MCAR}.
Figure \ref{fig:realdataICL} presents the ICL values in the Traumabase dataset for different numbers of classes. {If both algorithms select $K=3$ number of classes, the ICL of the algorithm which considers MNAR$z$ data is 
nonetheless always significantly higher than that of the algorithm for MCAR data.} %In the categorical case, the algorithm for MNAR$z$ data selects $K=10$, whereas the algorithm for MCAR data selects $K=9$. Both algorithms select higher number of classes in the continuous case, $K=20$ classes for MNAR$z$ data and $K=10$ classes for MCAR data. The case where the algorithms select very different numbers of clusters is therefore the continuous case, where the variables are no longer considered independent conditionally to the group membership.  We can assume that the information provided by the covariances is significant in this dataset, which would show the limits of our model in the mixed case, although this assumption of conditional independence is very classical in clustering. Note also that the ICL of the algorithm which considers MNAR$z$ data is always higher than the one of the algorithm for MCAR data. 
{Their corresponding ARI between classifications obtained assuming either MNAR$z$ or MCAR mechanisms is about 0.90.} 
Thus, both partitions are close but not equal, which may reflect the influence of the mechanism. To deepen this issue, we focus on the variable \textit{Shock.index.ph}. Table \ref{tab:realdata_distribvar} and \ref{tab:realdata_probaclass} compare the performances of the algorithm considering MNAR$z$ data with the one considering MCAR data in terms of modelling of the marginal distribution of \textit{Shock.index.ph} and partition estimation. As the values can be compared only up to label swapping, we notice that the minimum values (on the diagonals) are significantly higher than zero, which indicates that there is an influence of the MNAR$z$ mechanism on the modeling of the data and on the classification rules.

\begin{figure}
\centering
\begin{subfigure}[t]{0.43\textwidth}
    \centering
    \includegraphics[width=\textwidth]{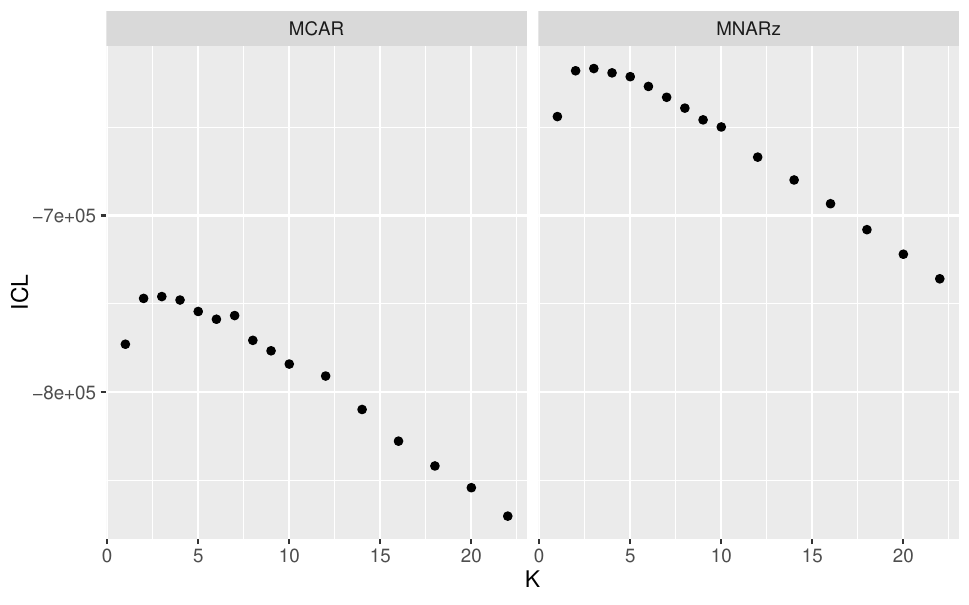}
    \caption{ICL values for different $K$ in the mixed case.}
    \label{fig:realdataICL}
\end{subfigure}
\begin{minipage}{0.49\textwidth}
\vspace*{-2cm}
\begin{subfigure}{\textwidth}
\centering
\scalebox{0.6}{
\begin{tabular}{ccccccc}

\diagbox{MCAR}{MNAR$z$} & 
Class 1 & Class 2 & Class 3 \\ 
\hline
Class 1  & \textbf{0.03} & 0.47 & 0.61 \\ \hline
Class 2 &   0.45   &  \textbf{0.05} & 0.25 \\ \hline
Class 3    &  0.63 & 0.23 & \textbf{0.03} \\ \hline
\end{tabular}
}
\caption{\label{tab:realdata_distribvar} Total variation distance between the marginal distribution of the variable \textit{Shock.index.ph} considering MNAR$z$ data and the one considering MCAR data.}
\end{subfigure}
\begin{subfigure}{\textwidth}
\centering
\scalebox{0.6}{
\begin{tabular}{ccccccc}
\diagbox{MCAR}{MNAR$z$} & 
Class 1 & Class 2 & Class 3 \\ 
\hline
Class 1  & \textbf{2.43} & 26.5  & 37.6 \\ \hline
Class 2 &  26.2    & \textbf{3.40} & 20.1 \\ \hline
Class 3    &  39.3  & 19.2 & \textbf{2.05} \\ \hline
\end{tabular}
}
\caption{\label{tab:realdata_probaclass} Euclidean distance between conditional probabilities of the cluster memberships given observed values of the variable \textit{Shock.index.ph}, considering MNAR$z$ or MCAR data (more details in Appendix \ref{app:realdata}).}
\end{subfigure}
% \vfill
% \begin{subfigure}[b]{0.8\textwidth}
% \scalebox{0.75}{
% \begin{tabular}{ccccccc}

% \diagbox{MCAR}{MNAR$z$} & 
% Class 1 & Class 2 & Class 3 \\ 
% \hline
% Class 1  & \textbf{0.03} & 0.47 & 0.61 \\ \hline
% Class 2 &   0.45   &  \textbf{0.05} & 0.25 \\ \hline
% Class 3    &  0.63 & 0.23 & \textbf{0.03} \\ \hline
% \end{tabular}
% }
% \caption{\label{tab:realdata_distribvar} Total variation distance between the marginal distribution of the variable \textit{Shock.index.ph} considering MNAR$z$ data and the one considering MCAR data.}
% \end{subfigure}
\end{minipage}
\caption{Comparison of the MCAR and MNAR$z$ mechanism on Traumabase dataset.}
\end{figure}

%Moreover, in Table \ref{tab:realdata_classif}, we compute the ARI for the classification assuming MNAR$z$ (resp. MCAR) mechanism and the home-made classification by the doctors considered as a ground truth. %\footnote{We compare to the home-made classification by the doctors which assumes $K=4$ when $K=4$ and the one which assumes $K=3$ for the other cases ($K=2,3,5$).} the ICL (fourth column), the ARI between the classifications obtained assuming MNAR$z$ and MCAR mechanisms (fifth column). 
%Note that the difficulty is that we do not have the \textit{true} classification, because the one made by the doctors are hand-made and they use very few variables (one or two). The low ARIs (always inferior to 10\%) may therefore reflect the fact that we take into account many variables and that our classification is therefore necessarily different from those of physicians who use very few variables. 

%\begin{table}
%    \centering
%    \begin{tabular}{c|c|c}
%     & ARI (MCAR) & ARI (MNAR$z$)  \\
%     \hline
%    $K=2$ & 9,2\% & 9.3\% \\
%    \hline
%    $K=3$  & 9,3\% &  9.3\%\\
%   $K=4$ & 6.2\% & 8.4\% \\
%    $K=5$ &  7.0\% & 8.6\%\\
%    \end{tabular}
%    \caption{Comparison with the home-made classification.}
%    \label{tab:realdata_classif}
%\end{table}

In Appendix \ref{app:realdata}, we assess also the results by using the function \texttt{catdes} of the R package FactoMineR \citep{hussonfacto2008} which allows to see how the cluster of the classification is described by the variables. The three groups described with our algorithm assuming MNAR$z$ data seem to be described by the same characteristics than those given by the doctors: the first group is formed by patients with a higher mortality rate and more severe injuries than the average population and the third group by a lower mortality rate and less serious injuries, whereas the second group may correspond to other cases. Note that this classification was done without using variables related to patient death and that it is quite striking to retrieve the same characteristics. This reinforces the idea that the classification obtained makes sense and may provide other information than the one of the doctors, taking into account more variables.%, because it takes into account all the variables.

\section{Concluding remarks}
\label{sec:conclu}

This paper addresses {model-based} unsupervised learning when MNAR values occur. 
{We propose to cluster individuals via an estimation of the mixture model parameters in play. A by-product of such an approach is that the missing values can be also imputed, once the distribution is estimated.} To this end, we have proposed an approach which embeds MNAR data directly within model-based clustering algorithms, in particular the EM algorithm. We have discussed several possible MNAR specifications. However, the numerical experiment leads us to recommend using algorithms considering a simple missing-data mechanism, the MNAR$z$ mechanism, which models the probability of being missing only depending on the class membership. By its very simplicity, {the latter is} indeed able to straightforwardly deal with any kind of data. 
{In addition to being interpretable (which is especially important for real applications), this MNAR$z$ mechanism can be apprehended as a MAR one on the augmented matrix $[Y|C]$, including the missing-data pattern $C$ (Theorem \ref{propo:MAR.MNARz}). This echoes a widely-used approach in practice, not theoretically studied so far. }

The {seminal} motivation of this work was {clustering patients of the Traumabase dataset, in particular to assist doctors in their medical care.  }
{After a first conclusive application, } there are still key challenges to make this work entirely applicable to real datasets. 
First, if our methodology can be applied to mixed data (categorical/quan\-ti\-ta\-tive),  a straightforward extension of the proposed approach {should be doable to handle variables that are not necessarily of the same type (MCAR, MAR and MNAR variables are indeed often coupled)}. 
{Without any prior help from experts, this actually remains an open question to automatically evaluate the missing type of variables.} {Note however that one can arbitrate between the presented MNAR mechanisms using the ICL criterion, at the price of running multiple times the algorithm for the different MNAR scenarios. %Therefore, without any insight on the MNAR type, we highly recommend to use the MNAR$z$ mechanism, its versatility having already been outlined.}

\newpage

\begin{appendices}

\section{Proof of Theorem \ref{propo:MAR.MNARz}}\label{app:proofMAR}

\begin{proof}[Proof of Theorem~\ref{propo:MAR.MNARz}]
We denote by $(\tilde\tc_1,\ldots,\tilde\tc_n)$ the patterns of missing data associated to the observed data $\tilde\ty^{\mathrm{obs}}$. It is thus the concatenation $\tilde\tc_i=(\tc_i,{{\bf 0}_d})$  of $\tc_i$ with the zero vector ${\bf 0}_d=(0,\ldots,0)$ of length $d$. Since all $c_i$ values are observed in $\tilde\ty_i^{\mathrm{obs}}$, it is the reason why the last $d$ values in $\tilde\tc_i$ are fixed to zero. Then, the MAR assumption indicates that $\PP(\tilde\tc_i \mid \tilde\ty_i,\tz_i ; \zeta)=\PP(\tilde\tc_i \mid \tilde\ty^{\mathrm{obs}}_i;\zeta)$, with $\zeta$ the related parameter. %\footnote{This proof can be enlarged to the non parametric case for $\PP(\tilde\tc_i \mid \tilde\ty^\mathrm{obs}_i)$.}. 
Consequently, using the MAR assumption and the i.i.d. assumption of all uplets $(\tilde\ty_i,\tz_i,\tilde\tc_i)$, the whole likelihood can be decomposed into two likelihoods, one has
\begin{eqnarray*}
%\nonumber
L(\theta, \zeta ; \tilde{Y^{\mathrm{obs}}},C) & = & \prod_{i=1}^n\int f(\tilde{\ty}_i,\tilde{\tc}_i;\btheta,\zeta)d\tilde{\ty}_i^{\mathrm{mis}}\\
& = & \prod_{i=1}^n\int f(\tilde{\ty}_i;\pi,\lambda,\psi)f(\tilde{\tc}_i|\tilde{\ty}_i;\zeta)d\tilde{\ty}_i^{\mathrm{mis}}\\
& = & \prod_{i=1}^n \left[f(\tilde{\tc}_i|\tilde{\ty}_i^\mathrm{obs};\zeta)\; \times \; \int_{{\mathcal Y}^{\mathrm{mis}}_i} f(\tilde{\ty}_i;\pi,\lambda,\psi) d\tilde{\ty}_i^{\mathrm{mis}}\right] \\
& = & \prod_{i=1}^n L(\zeta ; \tilde{\tc}_i \mid \tilde{\ty}_i^{\mathrm{obs}}) \; \times\; \prod_{i=1}^n L(\bpi,\lambda, \psi ; \tilde{\ty}_i^{\mathrm{obs}}). %\label{eq:likelihood.MAR}
\end{eqnarray*}

\begin{equation}
L(\pi,\lambda, \psi,\zeta;\tilde\ty^\mathrm{obs}_i,\tilde\tc_i) = L(\zeta;\tilde\tc_i \mid \tilde\ty^\mathrm{obs}_i) \times L(\pi,\lambda, \psi;\tilde\ty^\mathrm{obs}_i).
\end{equation}
Providing that $(\pi,\lambda,\psi)$ and $\zeta$ are functionally independent (ignorability of the MAR mechanism), the maximum likelihood estimate of $\theta=(\pi,\lambda, \psi)$ is obtained by maximizing only $L(\bpi,\lambda,\psi ; \tilde{\ty}_i^{\mathrm{obs}})$, and does not depend on $L(\zeta;\tilde\tc_i \mid \tilde\ty^\mathrm{obs}_i)$. Finally, by using~(\ref{eq:mixture2}), the observed likelihood $L(\pi,\lambda,\bpsi;\tilde\ty^\mathrm{obs}_i)$ is
\begin{eqnarray}
L(\pi,\lambda,\bpsi;\tilde\ty^\mathrm{obs}_i) & = & \sum_{k=1}^K \pi_k f_k(\ty_i^\mathrm{obs};\lambda_k) \prod_{j=1}^d \rho(\alpha_{k})^{c_{ij}} \rho(\alpha_{k})^{(1- c_{ij})} \\
& = & \sum_{k=1}^K \pi_k f_k(\ty_i^\mathrm{obs};\lambda_k) \prod_{j=1}^d f(c_{ij} \mid z_{ik}=1 ; \bpsi).
\end{eqnarray}
As $\PP(c_{ij} \mid z_{ik}=1 ; \bpsi)$ corresponds to the MNAR$z$ definition~(\ref{eq:MNARz}), the observed likelihood $L(\pi,\lambda,\bpsi;\tilde\ty^\mathrm{obs}_i)$ is equal to the full observed likelihood $L(\pi,\lambda,\bpsi;\ty^\mathrm{obs}_i,\tc_i)$ associated to the MNAR$z$ model, 
\begin{equation*}
     L(\pi,\lambda,\bpsi;\ty^\mathrm{obs}_i,\tc_i)=\sum_{k=1}^K \pi_k f_k(\ty_i^\mathrm{obs};\lambda_k) \prod_{j=1}^d f(c_{ij} \mid z_{ik}=1 ; \bpsi).
\end{equation*}
\end{proof}

\section{Detail on EM algorithm}\label{app:details_algo}

%Here are detailed (classical) formula for calculation of $E_\ty^{[r-1]}(\btheta)$ (E-step) and maximization of $Q_\ty(\btheta;\btheta^{[r-1]},\bpsi^{[r-1]})$ over $\btheta$ (M-step), once the $t_{ik}(\theta^{[r-1]})$s are computed. Related results depend on the mixture model at hand, thus on the kind of features.

The EM algorithm consists on two steps iteratively proceeded: the E-step and M-step. For the E-step, one has
{\small
\begin{align*}
Q(\theta;\theta^{[r-1]})&=\mathbb{E}[\ell_{\mbox{\tiny comp}}(\theta;Y,Z,C)|y_i^\mathrm{obs},c_i;\theta^{[r-1]}]\\
&=\sum_{i=1}^n\sum_{k=1}^K \mathbb{E}\left[z_{ik}\log(\pi_k f_k(\ty_i;\lambda)f(\tc_i\mid \ty_i,z_{ik}=1;\bpsi))\mid y_i^\mathrm{obs},c_i;\pi^{[r-1]},\lambda^{[r-1]},\psi^{[r-1]}\right] \\
%&=\sum_{i=1}^n\sum_{k=1}^K \int_{\mathcal{Y}_i^\mathrm{mis}} \log(\pi_k f_k(\ty_i;\lambda)f(\tc_i\mid \ty_i,z_{ik}=1;\bpsi)) f(\ty_i^\mathrm{mis},z_{ik}=1|\ty_i^{\mathrm{obs}},\tc_i;\pi^{[r-1]},\lambda^{[r-1]},\psi^{[r-1]})d\ty_i^\mathrm{mis} \\
&=\sum_{i=1}^n\sum_{k=1}^K  t_{ik}(\theta^{[r-1]})  \mathbb{E}\left[\log(\pi_k f_k(\ty_i;\lambda)f(\tc_i\mid \ty_i,z_{ik}=1;\bpsi))\mid y_i^\mathrm{obs},c_i,z_{ik}=1;\pi^{[r-1]},\lambda^{[r-1]},\psi^{[r-1]}\right]
\end{align*}
%using $f(\ty_i^\mathrm{mis},z_{ik}=1|\ty_i^{\mathrm{obs}},\tc_i;\theta^{[r-1]})= t_{ik}(\theta^{[r-1]}) f(\ty_i^\mathrm{mis}|\ty_i^{\mathrm{obs}},z_{ik}=1,\tc_i;\theta^{[r-1]})$ 
with $t_{ik}(\theta^{[r-1]}) = f(z_{ik}=1 \mid  \ty_i^{\mathrm{obs}},\tc_i;\theta^{[r-1]})$.}

It leads to the decomposition 

$$
Q(\theta;\theta^{[r-1]})=\sum_{i=1}^n \sum_{k=1}^K t_{ik}(\theta^{[r-1]})\left[ \log(\pi_k) +  \tau_y(\lambda_k;\ty_i^{\mathrm{obs}},\tc_i,\theta^{[r-1]}) +  \tau_c(\psi_k;\ty_i^{\mathrm{obs}},\tc_i,\theta^{[r-1]}) \right],
$$
where the terms involved in this decomposition are now detailed. 

\begin{enumerate}[label=(\alph*)]
%\item \label{eipi} the expectation of the mixing proportion parameters over the missing values given the available information:
%$$E_{i\pi}^{[r-1]}(\pi)=\mathbb{E}\left[\log(\pi_k)\mid \ty_i^{\mathrm{obs}},{z_{ik}=1},\tc_i;\btheta^{[r-1]},\bpsi^{[r-1]},\pi^{[r-1]}\right]$$
\item \label{eiy}the expectation of the data mixture part over the missing values given the available information (\ie the observed data and the indicator pattern), the class membership and the current value of the parameters:  
$$ \tau_y(\lambda_k;\ty_i^{\mathrm{obs}},\tc_i,\theta^{[r-1]})=\mathbb{E}_{\theta^{[r-1]}}\left[\log f_k(y_i;\lambda_k)\mid \ty_i^{\mathrm{obs}},z_{ik}=1,\tc_i\right],$$ 
\item \label{eic} the expectation of the missing mechanism part over the missing values given the available information, the class membership and the current value of the parameters: 
$$\tau_c(\psi_k;\ty_i^{\mathrm{obs}},\tc_i,\theta^{[r-1]})=\mathbb{E}_{\theta^{[r-1]}}\left[\log f_k(c_i\mid y_i;\bpsi_k)\mid \ty_i^{\mathrm{obs}},z_{ik}=1,\tc_i\right].$$
\item \label{tauik} the conditional probability for an observation $i$ to belong to the class $k$ given the available information and the current value of the parameters: $$t_{ik}(\theta^{[r-1]}) = f(z_{ik}=1 \mid  \ty_i^{\mathrm{obs}},\tc_i;\theta^{[r-1]}).$$
\end{enumerate}
Terms \ref{eiy} and \ref{eic} require to integrate over the distribution $f(\ty_{i}^{\mathrm{mis}}\mid \ty_i^{\mathrm{obs}},z_{ik}=1,\tc_i;\theta^{[r-1]})$. 
For Term \ref{eiy}, one has
\begin{align}
\nonumber
&f(\ty_{i}^{\mathrm{mis}}\mid \ty_i^{\mathrm{obs}},z_{ik}=1,\tc_i;\theta^{[r-1]}) \\
&=\frac{f(\ty_{i}^{\mathrm{mis}},\ty_i^\mathrm{obs},z_{ik}=1,\tc_i;\theta^{[r-1]})}{f(\ty_i^\mathrm{obs},z_{ik}=1,\tc_i;\theta^{[r-1]})} \\
\label{eq:astuce_missobs}
&=\frac{f(\tc_i\mid \ty_{i}^{\mathrm{mis}},\ty_i^{\mathrm{obs}},z_{ik}=1;\psi^{[r-1]})f(\ty_{i}^{\mathrm{mis}}, \ty_i^{\mathrm{obs}}, z_{ik}=1;\lambda^{[r-1]})}{\int_{\mathcal{Y}_{i}^\mathrm{mis}} f(\tc_i\mid \ty_{i}^{\mathrm{mis}},\ty_i^{\mathrm{obs}},z_{ik}=1;\psi^{[r-1]})f(\ty_{i}^{\mathrm{mis}}, \ty_i^{\mathrm{obs}}, z_{ik}=1;\lambda^{[r-1]}) d\ty_{i}^\mathrm{mis}}.
\end{align}
Term \ref{tauik} corresponds to the conditional probability for an observation $i$ to arise from the $k$th mixture component with the current values of the model parameter. More particularly, one has
\begin{align}
\nonumber
t_{ik}(\theta^{[r-1]})&=\frac{f(z_{ik}=1,  \ty_i^{\mathrm{obs}},\tc_i;\theta^{[r-1]})}{f(\ty_i^{\mathrm{obs}},\tc_i;\theta^{[r-1]})} \\
\nonumber
&=\frac{f(z_{ik}=1,  \ty_i^{\mathrm{obs}},\tc_i;\theta^{[r-1]})}{\sum_{h=1}^{K}f(z_{ih}=1,\ty_i^{\mathrm{obs}},\tc_i;\theta^{[r-1]})} \\
\nonumber
&=\frac{\pi_k^{[r-1]}f(\ty_i^{\mathrm{obs}}\mid z_{ik}=1;\lambda^{[r-1]}_k)f(\tc_i\mid \ty_i^{\mathrm{obs}},z_{ik}=1;\theta^{[r-1]})}{\sum_{h=1}^{K}\pi_h^{[r-1]})f(\ty_i^{\mathrm{obs}}\mid z_{ih}=1;\lambda^{[r-1]}_h)f(\tc_i\mid \ty_i^{\mathrm{obs}},z_{ih}=1;\theta^{[r-1]})} \\
&=\frac{\pi^{[r-1]}_k f_k(\ty_i^{\mathrm{obs}};\lambda^{[r-1]}_k)f(\tc_i\mid \ty_i^{\mathrm{obs}},z_{ik}=1;\theta^{[r-1]})}{\sum_{h=1}^K \pi^{[r-1]}_h f_h(\ty_i^{\mathrm{obs}};\lambda^{[r-1]}_h)f(\tc_i\mid \ty_i^{\mathrm{obs}},z_{ih}=1;\theta^{[r-1]})}
\label{astuce_tauik}
\end{align}

%%%%% REF PROD_CI
%Note also that
%\begin{equation}
%\label{eq:prod_ci}
%f(\tc_i\mid \ty_i^{\mathrm{obs}},z_{ik}=1;\theta^{[r-1]},\bpsi^{[r-1]})=\prod_{j=1}^{d} f(\tc_{ij}\mid \ty_i^{\mathrm{obs}},z_{ik}=1;\theta^{[r-1]},\bpsi^{[r-1]})
%\end{equation}

\subsection{Gaussian mixture for continuous data}
\label{app:EMgauss}
The pdf $f_k(\ty_i;\lambda) = \phi(\ty_i;\mu_k,\Sigma_k)$ is assumed to be a Gaussian distribution with mean vector $\mu_k$ and covariance matrix $\Sigma_k$. 
First, let us detail the terms of the E-step. Term \ref{eiy} is written as follows:
\begin{multline*}
%\begin{split}
\mathbb{E}\left[\log(\phi(\ty_i;\mu_k,\Sigma_k))\mid \ty_i^{\mathrm{obs}},z_{ik}=1,\tc_i;\theta^{[r-1]}\right] =-\frac{1}{2}\left[n\log(2\pi) + \log((\mid\Sigma_k\mid)) \right]\\
-\frac{1}{2}\mathbb{E}\left[ (\ty_i - \mu_k)^T(\Sigma_k)^{-1}(\ty_i - \mu_k)   \mid \ty_i^{\mathrm{obs}},z_{ik}=1,\tc_i;\theta^{[r-1]}\right].
%\end{split}
%\label{Estep::logY_i}.
\end{multline*}
%We hope to write the last term in \eqref{Estep::logY_i} as $tr(W_k\Sigma_k^{-1})$ where $W_k$ is the empirical covariance matrix times $n_k$. In this case, the update of $\Sigma_k$ will be $\frac{W_k}{n_k}$.\\
This last term could be expressed using the commutativity and linearity of the trace function:
\begin{multline*}
\mathbb{E}\left[ (\ty_i - \mu_k)^T(\Sigma_k)^{-1}(\ty_i - \mu_k)   \mid \ty_i^{\mathrm{obs}},z_{ik}=1,\tc_i;\theta^{[r-1]} \right] \\
= \mbox{tr}(\mathbb{E}\left[ (\ty_i - \mu_k)(\ty_i - \mu_k)^T   \mid \ty_i^{\mathrm{obs}},z_{ik}=1,c_i;\theta^{[r-1]} \right](\Sigma_k)^{-1}).
\end{multline*}
Finally note that only $\mathbb{E}\left[ (\ty_i - \mu_k)(\ty_i - \mu_k)^T   \mid \ty_i^{\mathrm{obs}},z_{ik}=1,\tc_i;\theta^{[r-1]} \right]$ has to be calculated. 

%\paragraph{MNAR$z$ and MNAR$\tzj$ models}
For the MNAR$z$ model, the effect of the missingness is only due to the class membership. %Term \ref{eiy} is the same for both models but \ref{eic} and \ref{tauik} differ. Let us first detail these terms.
\begin{itemize}
    \item
For Term \ref{eiy}, note that $$f(\ty_{i}^{\mathrm{mis}}\mid \ty_i^{\mathrm{obs}},z_{ik}=1,\tc_i;\theta^{[r-1]})=f(\ty_{i}^{\mathrm{mis}}\mid \ty_i^{\mathrm{obs}},z_{ik}=1;\lambda^{[r-1]}),$$
which makes the computation easy. 
Indeed, using \eqref{eq:astuce_missobs}, 
\begin{align*}
&f(\ty_{i}^{\mathrm{mis}}\mid \ty_i^{\mathrm{obs}},z_{ik}=1,\tc_i;\theta^{[r-1]}) \\
&=\frac{\prod_{j=1}^d \rho(\alpha_{k}^{[r-1]})^{c_{ij}}(1-\rho(\alpha_{k}^{[r-1]}))^{1-c_{ij}}f(\ty_{i}^{\mathrm{mis}}, \ty_i^{\mathrm{obs}}, z_{ik}=1;\lambda^{[r-1]})}{\int_{\mathcal{Y}_{i}^\mathrm{mis}} \prod_{j=1}^d \rho(\alpha_{k}^{[r-1]})^{c_{ij}}(1-\rho(\alpha_{k}^{[r-1]}))^{1-c_{ij}}f(\ty_{i}^{\mathrm{mis}}, \ty_i^{\mathrm{obs}}, z_{ik}=1;\lambda^{[r-1]}) d\ty_{i}^\mathrm{mis}} \\
&=\frac{f(\ty_{i}^{\mathrm{mis}}, \ty_i^{\mathrm{obs}}, z_{ik}=1;\lambda^{[r-1]})}{\int_{\mathcal{Y}_{i}^\mathrm{mis}}f(\ty_{i}^{\mathrm{mis}}, \ty_i^{\mathrm{obs}}, z_{ik}=1;\lambda^{[r-1]}) d\ty_{i}^\mathrm{mis}}=f(\ty_{i}^{\mathrm{mis}}\mid \ty_i^{\mathrm{obs}},z_{ik}=1;\lambda^{[r-1]}),
\end{align*}
since $\prod_{j=1}^d \rho(\alpha_{k}^{[r-1]})^{c_{ij}}(1-\rho(\alpha_{k}^{[r-1]}))^{1-c_{ij}}$ does not depend on $\ty_{i}^\mathrm{mis}$ and is simplified with the numerator. 
The law of $(\ty_{i}^{\mathrm{mis}}\mid \ty_i^{\mathrm{obs}},z_{ik}=1)$ is Gaussian (up to a reorganization of the variables associated to individual $i$). Noting that
\begin{align*}
\left(\ty_i\mid z_{ik}=1;\lambda^{[r-1]} \right) &= \left(\left( \begin{array}{c}
\ty_i^{\mathrm{obs}}\\
\ty_i^{\mathrm{mis}}
\end{array}\right)\mid z_{ik}=1;\lambda^{[r-1]} \right)  \\
&\sim \mathcal{N} \left( \left( \begin{array}{c}
(\mu_{ik}^{\mathrm{obs}})^{[r-1]}\\
(\mu_{ik}^{\mathrm{mis}})^{[r-1]}
\end{array}\right) , \left(\begin{array}{cc}
(\Sigma_{ik}^{\mathrm{obs},\mathrm{obs}})^{[r-1]} &(\Sigma_{ik}^{\mathrm{obs},\mathrm{mis}})^{[r-1]}\\
(\Sigma_{ik}^{\mathrm{mis},\mathrm{obs}})^{[r-1]} &(\Sigma_{ik}^{\mathrm{mis},\mathrm{mis}})^{[r-1]}
\end{array}\right)\right), 
\end{align*}
one obtains
\begin{equation}
\label{eq:misgivenobsapp}
\left( \ty_i^{\mathrm{mis}} \mid  \ty_i^{\mathrm{obs}},z_{ik}=1;\lambda^{[r-1]} \right) \sim \mathcal{N}\left((\tilde{\mu}_{ik}^{\mathrm{mis}})^{[r-1]},(\tilde{\Sigma}_{ik}^{\mathrm{mis}})^{[r-1]}\right).
\end{equation}
with $(\tilde{\mu}_{ik}^{\mathrm{mis}})^{[r-1]}$ and $(\tilde{\Sigma}_{ik}^{\mathrm{mis}})^{[r-1]}$ the standard expression of the mean vector and covariance matrix of a conditional Gaussian distribution (see for instance \cite{Anderson2003}) detailed as follows
\begin{align}
\label{eq:misgivenobs_mu}
(\tilde{\mu}_{ik}^{\mathrm{mis}})^{[r-1]} &= ({\mu}_{ik}^{\mathrm{mis}})^{[r-1]} + ({\Sigma}_{ik}^{\mathrm{mis},\mathrm{obs}})^{[r-1]} \left(({\Sigma}_{ik}^{\mathrm{obs},\mathrm{obs}})^{[r-1]}\right)^{-1} \left(\ty_i^{\mathrm{obs}} - ({\mu}_{ik}^{\mathrm{obs}})^{[r-1]} \right),\\
\label{eq:misgivenobs_sigma}
(\tilde{\Sigma}_{ik}^{\mathrm{mis}})^{[r-1]} &= ({\Sigma}_{ik}^{\mathrm{mis},\mathrm{mis}})^{[r-1]} - ({\Sigma}_{ik}^{\mathrm{mis},\mathrm{obs}})^{[r-1]} \left(({\Sigma}_{ik}^{\mathrm{obs},\mathrm{obs}})^{[r-1]}\right)^{-1} ({\Sigma}_{ik}^{\mathrm{obs},\mathrm{mis}})^{[r-1]}.
\end{align}
Note also that we have
\begin{equation*}
(\ty_i - \mu_k)(\ty_i - \mu_k)^T = \left( \begin{array}{cc}
(\ty_i^{\mathrm{obs}} - \mu_{ik}^{\mathrm{obs}})^T(\ty_i^{\mathrm{obs}} - \mu_{ik}^{\mathrm{obs}}) &(\ty_i^{\mathrm{obs}} - \mu_{ik}^{\mathrm{obs}})^T(\ty_i^{\mathrm{mis}} - \mu_{ik}^{\mathrm{mis}})\\
(\ty_i^{\mathrm{mis}} - \mu_{ik}^{\mathrm{mis}})^T(\ty_i^{\mathrm{obs}} - \mu_{ik}^{\mathrm{obs}}) &(\ty_i^{\mathrm{mis}} - \mu_{ik}^{\mathrm{mis}})^T(\ty_i^{\mathrm{mis}} - \mu_{ik}^{\mathrm{mis}})
\end{array} \right).
\end{equation*}

Therefore, the expected value of each block for the current parameter value is
\begin{equation*}
\begin{split}
\mathbb{E}\left[ (\ty_i^{\mathrm{obs}} - \mu_{ik}^{\mathrm{obs}})^T(\ty_i^{\mathrm{obs}} - \mu_{ik}^{\mathrm{obs}})  \mid \ty_i^{\mathrm{obs}},z_{ik}=1;\lambda^{[r-1]} \right] &= (\ty_i^{\mathrm{obs}} - \mu_{ik}^{\mathrm{obs}})^T(\ty_i^{\mathrm{obs}} - \mu_{ik}^{\mathrm{obs}})\\
\mathbb{E}\left[ (\ty_i^{\mathrm{obs}} - \mu_{ik}^{\mathrm{obs}})^T(\ty_i^{\mathrm{mis}} - \mu_{ik}^{\mathrm{mis}})  \mid \ty_i^{\mathrm{obs}},z_{ik}=1;\lambda^{[r-1]}\right] &= (\ty_i^{\mathrm{obs}} - \mu_{ik}^{\mathrm{obs}})^T((\tilde{\mu}_{ik}^{\mathrm{mis}})^{[r-1]} - \mu_{ik}^{\mathrm{mis}})\\
\mathbb{E}\left[ (\ty_i^{\mathrm{mis}} - \mu_{ik}^{\mathrm{mis}})^T(\ty_i^{\mathrm{mis}} - \mu_{ik}^{\mathrm{mis}})   \mid \ty_i^{\mathrm{obs}},z_{ik}=1;\lambda^{[r-1]} \right] &=
((\tilde{\mu}_{ik}^{\mathrm{mis}} )^{[r-1]}- \mu_{ik}^{\mathrm{mis}})^T((\tilde{\mu}_{ik}^{\mathrm{mis}})^{[r-1]} - \mu_{ik}^{\mathrm{mis}})+(\tilde{\Sigma}_{ik}^{\mathrm{mis}})^{[r-1]}.
\end{split}
\end{equation*}
\item For Term \ref{eic}, $f(\tc_i\mid \ty_i,z_{ik}=1;\bpsi)$ is independent of $\ty_i$, which implies 
\begin{equation}
\label{eq:termb_gauss_MNARz}
\log(f(\tc_i\mid z_{ik}=1;\bpsi))=\sum_{j=1}^d c_{ij}\log\rho(\alpha_{k})+ (1-c_{ij})\log(1-\rho(\alpha_{k}))
\end{equation}
\item For Term \ref{tauik}, one first remark that
$$\mathbb{P}(\tc_i\mid \ty_i^\mathrm{obs},z_{ik}=1;\theta^{[r-1]})=\prod_{j=1}^d \mathbb{P}(c_{ij}=1\mid \ty_i^\mathrm{obs},z_{ik}=1;\theta^{[r-1]})^{c_{ij}}\mathbb{P}(c_{ij}=0\mid y_i^\mathrm{obs},z_{ik}=1;\theta^{[r-1]})^{1-c_{ij}}.$$
In particular, for MNAR$z$, by independence of $\ty_i$, one has $$\mathbb{P}(c_{ij}=1\mid \ty_i^\mathrm{obs},z_{ik}=1;\theta^{[r-1]})=\mathbb{P}(c_{ij}=1\mid z_{ik}=1;\theta^{[r-1]})=\rho(\alpha_{k})$$
Using \eqref{astuce_tauik}, one obtains
\begin{equation}
\label{eq:tauikMNARz_zj}
t_{ik}^{[r-1]}(\theta^{[r-1]}) = \frac{\pi^{[r-1]}_k \phi(\ty_i^{\mathrm{obs}};(\mu_{ik}^{\mathrm{obs}})^{[r-1]},(\Sigma_{ik}^{\mathrm{obs},\mathrm{obs}})^{[r-1]})\prod_{j=1}^d \rho(\alpha_{k}^{[r-1]})^{c_{ij}}(1-\rho(\alpha_{k}^{[r-1]}))^{1-c_{ij}}}{\sum_{h=1}^K \pi^{[r-1]}_h \phi(\ty_i^{\mathrm{obs}};(\mu_{ih}^{\mathrm{obs}})^{[r-1]},(\Sigma_{ih}^{\mathrm{obs},\mathrm{obs}})^{[r-1]})\prod_{j=1}^d \rho(\alpha_{h}^{[r-1]})^{c_{ij}}(1-\rho(\alpha_{h}^{[r-1]}))^{1-c_{ij}}}
\end{equation}
\end{itemize}
If $\rho$ is the logistic distribution, the expression can be written more simply
\begin{equation*}
t_{ik}(\theta^{[r-1]}) \propto \pi_k^{[r-1]} \phi(\ty_i^{\mathrm{obs}};\lambda_k^{[r-1]}) \prod_{j=1}^d \left(1+\exp(-\delta_{ij} \alpha_{k}^{[r-1]})\right)^{-1}
\text{ where }\delta_{ij}=\left\lbrace \begin{array}{rl}
1 &\text{ if } c_{ij}=1\\
-1 &\text{ otherwise.}
\end{array}\right.
\end{equation*}

Finally, the E-step and the M-step can be sketched as follows in the Gaussian mixture case. 
\medbreak
\textbf{E-step} 
The E-step for Term \ref{eiy} consists of computing for $k=1,\ldots,K$ and $i=1,\ldots,n$
\begin{equation*}
\begin{split}
(\tilde{\mu}_{ik}^{\mathrm{mis}})^{[r-1]}&= (\mu_{ik}^{\mathrm{mis}})^{[r-1]} + (\Sigma_{ik}^{\mathrm{mis},\mathrm{obs}})^{[r-1]} \left((\Sigma_{ik}^{\mathrm{obs},\mathrm{obs}})^{[r-1]}\right)^{-1} \left(\ty_i^{\mathrm{obs}} - (\mu_{ik}^{\mathrm{obs}})^{[r-1]} \right)\\
(\tilde{\Sigma}_{ik}^{\mathrm{mis}})^{[r-1]} &= (\Sigma_{ik}^{\mathrm{mis},\mathrm{mis}})^{[r-1]} - (\Sigma_{ik}^{\mathrm{mis},\mathrm{obs}})^{[r-1]} \left((\Sigma_{ik}^{\mathrm{obs},\mathrm{obs}})^{[r-1]}\right)^{-1} (\Sigma_{ik}^{\mathrm{obs},\mathrm{mis}})^{[r-1]}\\
(\tilde{y}_{i,k})^{[r-1]} &= (\ty_i^{\mathrm{obs}},(\tilde{\mu}_{ik}^{\mathrm{mis}})^{[r-1]})\\
\tilde{\Sigma}_{ik}^{[r-1]} &= \left( \begin{array}{cc}
0_i^{\mathrm{obs},\mathrm{obs}} & 0_i^{\mathrm{obs},\mathrm{mis}}\\
0_i^{\mathrm{mis},\mathrm{obs}} & (\tilde{\Sigma}_{ik}^{\mathrm{mis}})^{[r-1]}
\end{array}\right).
\end{split}
\end{equation*}
Note that whenever the mixture covariance matrices are supposed diagonal then $(\tilde{\Sigma}_{ik}^{\mathrm{mis}})^{[r-1]}$ is also a diagonal matrix. %This M step is equivalent to AAC EM algorithm.
Term \ref{tauik} also requires the computation of $t_{ik}(\theta^{[r-1]})$ given in \eqref{eq:tauikMNARz_zj} for $k=1,\ldots,K$ and $i=1,\ldots,n$.

\medbreak
\textbf{M-step} The maximization of $Q(\theta;\theta^{[r-1]})$ over $(\pi,\lambda)$ leads to, for $k=1,\ldots,K$,
\begin{equation*}
\begin{split}
\pi_k^{[r]} &= \frac{1}{n} \sum_{i=1}^n t_{ik}(\theta^{[r-1]}) \\
\mu_k^{[r]} &= \frac{\sum_{i=1}^n t_{ik}(\theta^{[r-1]}) (\tilde{y}_{k,i})^{[r-1]}}{\sum_{i=1}^n t_{ik}(\theta^{[r-1]})}\\
\Sigma_k^{[r]} &= \frac{\sum_{i=1}^n \left[t_{ik}(\theta^{[r-1]}) \left((\tilde{y}_{i,k})^{[r-1]} - \mu_k^{r})((\tilde{y}_{i,k})^{[r-1]} - \mu_k^{r})^T+\tilde{\Sigma}_{ik}^{[r-1]}\right) \right]}{\sum_{i=1}^n t_{ik}(\theta^{[r-1]})}.
\end{split}
\end{equation*}
Then, the maximization of $Q(\theta;\theta^{[r-1]})$ over $\psi$ can be performed using a Newton Raphson algorithm. For $k=1,\dots,K$, it remains to fit a generalized linear model with the binomial link function for the matrix $(\mathcal{J}_k^{\textrm{MNAR$z$}})^{[r]}$ and by giving $t_{ik}(\theta^{[r-1]})$ as prior weights to fit the process.
\begin{equation}
\label{eq:phi_JMNARz}
(\mathcal{J}_k^{\textrm{MNAR$z$}})^{[r]}=\left(\begin{array}{cc} c_{.1} & 1
\\
\vdots & \vdots \\
c_{.d} & 1\end{array}\right).
\end{equation}

The EM algorithm for the MNAR$\tz$ model is described in Algorithm \ref{alg:EM} for Gaussian mixture.

\begin{algorithm}
\caption{EM algorithm for Gaussian mixture and MNAR$z$ model}
\label{alg:EM}
\begin{algorithmic}[1]
\STATE {\bfseries Input:} $Y \in \mathbb{R}^{n\times d}$ (matrix containing missing values), $K\geq 1$, $r_\mathrm{max}$.
\STATE Initialize $\pi_k^0$, $\mu_{k}^0,\Sigma_k^0$ and $\psi_k^0$, for $k \in \{1,\dots,K\}$.
\FOR{$r=0$ {\bfseries to} $r_\mathrm{max}$}
    
    \STATE \textbf{E-step}:
    
    \FOR{$i=1$ {\bfseries to} $n$, $k=1$ {\bfseries to} $K$}
        \STATE $(\tilde{\mu}_{ik}^{\mathrm{mis}})^{[r-1]}= (\mu_{ik}^{\mathrm{mis}})^{[r-1]} + (\Sigma_{ik}^{\mathrm{mis},\mathrm{obs}})^{[r-1]} \left((\Sigma_{ik}^{\mathrm{obs},\mathrm{obs}})^{[r-1]}\right)^{-1} \left(\ty_i^{\mathrm{obs}} - (\mu_{ik}^{\mathrm{obs}})^{[r-1]} \right)$.
        \STATE $(\tilde{\Sigma}_{ik}^{\mathrm{mis}})^{[r-1]} = (\Sigma_{ik}^{\mathrm{mis},\mathrm{mis}})^{[r-1]} - (\Sigma_{ik}^{\mathrm{mis},\mathrm{obs}})^{[r-1]} \left((\Sigma_{ik}^{\mathrm{obs},\mathrm{obs}})^{[r-1]}\right)^{-1} (\Sigma_{ik}^{\mathrm{obs},\mathrm{mis}})^{[r-1]}$.
        \STATE $(\tilde{y}_{i,k})^{[r-1]} = (\ty_i^{\mathrm{obs}},(\tilde{\mu}_{ik}^{\mathrm{mis}})^{[r-1]})$.
        \STATE $\tilde{\Sigma}_{ik}^{[r-1]} = \left( \begin{array}{cc}
0_i^{\mathrm{obs},\mathrm{obs}} & 0_i^{\mathrm{obs},\mathrm{mis}}\\
0_i^{\mathrm{obs},\mathrm{mis}} & (\tilde{\Sigma}_{ik}^{\mathrm{mis}})^{[r-1]}
\end{array}\right)$, where $0_i^{\mathrm{obs},\mathrm{obs}}$ and $0_i^{\mathrm{obs},\mathrm{mis}}$ are the null matrix of size $n_i^\mathrm{obs}\times n_i^\mathrm{obs}$ and $n_i^\mathrm{obs}\times n_i^\mathrm{mis}$, with $n_i^\mathrm{obs}$ (resp. $n_i^\mathrm{miss}$) the number of observed (reps. missing) variables for individual $i$.
        \STATE $t_{ik}(\theta^{[r-1]}) \propto
\pi^{[r-1]}_k \phi(\ty_i^{\mathrm{obs}};(\mu_{ik}^{\mathrm{obs}})^{[r-1]},(\Sigma_{ik}^{\mathrm{obs},\mathrm{obs}})^{[r-1]})\prod_{j=1}^d \rho(\alpha_{k}^{[r-1]})^{c_{ij}}(1-\rho(\alpha_{k}^{[r-1]}))^{1-c_{ij}}$
    
    \ENDFOR
    
    \STATE \textbf{M-step}:
    
    \FOR{$k=1$ {\bfseries to} $K$}
    \STATE $\pi_k^{[r]} = \frac{1}{n} \sum_{i=1}^n t_{ik}(\theta^{[r-1]}), \qquad
\mu_k^{[r]} = \frac{\sum_{i=1}^n t_{ik}(\theta^{[r-1]}) (\tilde{y}_{k,i})^{[r-1]}}{\sum_{i=1}^n t_{ik}(\theta^{[r-1]})}$
    \STATE $\Sigma_k^{[r]} = \frac{\sum_{i=1}^n \left[t_{ik}(\theta^{[r-1]}) \left((\tilde{y}_{i,k})^{[r-1]} - \mu_k^{[r]})((\tilde{y}_{i,k})^{[r-1]} - \mu_k^{[r]})^T+\tilde{\Sigma}_{ik}^{[r-1]}\right) \right]}{\sum_{i=1}^n t_{ik}(\theta^{[r-1]})}$
    \STATE Let $\psi_k^{[r]}$ be the coefficients of a GLM with a binomial link function, by giving prior weights $t_{ik}(\theta^{[r-1]})$. In particular, the optimization problem is
    $$(\mathcal{J}_k^{\textrm{MNAR$z$}})^{[r]}\psi_k^{[r]} =\log\left(\frac{1-\mathbb{E}[\tc|(\mathcal{J}_k^{\textrm{MNAR$z$}})^{[r]}])}{\mathbb{E}[\tc|(\mathcal{J}_k^{\textrm{MNAR$z$}})^{[r]}]}\right),$$
    for a matrix $(\mathcal{J}_k^{\textrm{MNAR$z$}})^{[r]}$ given in \eqref{eq:phi_JMNARz} and $\tc=(\tc_{.1},\dots,\tc_{.d})$ the concatenated missing data patterns for the variables $1,\dots,d$. 
    
    \ENDFOR

\ENDFOR
\end{algorithmic}
\end{algorithm}

%\paragraph{MNAR$\tyall$ models}

\subsection{Latent class model for categorical data}
\label{app:EMcategorical}

%\subsubsection{Computations of $Q_{\ty,\tz}$}
%I will develop the term $Q_{\ty,\tz}$:
For categorical data, we have $\phi(\ty_i;\lambda_k)=\prod_{j=1}^d \phi(y_{ij};\lambda_{kj})=\prod_{j=1}^d \prod_{\ell=1}^{\ell_j} (\lambda_{kj}^{\ell})^{y_{ij}^\ell}$.

Term \ref{eiy} is
\begin{equation}
\mathbb{E}\left[\log(\phi(\ty_i;p_k))\mid \ty_i^{\mathrm{obs}},z_{ik}=1,\tc_i;\lambda^{[r-1]}\right] = \sum_{j, c_{ij}=0}\sum_{\ell=1}^{\ell_j} y_{ij}^\ell + \sum_{j, c_{ij}=1}\sum_{\ell=1}^{\ell_j} \log(\lambda_{kj}^{y_{ij}^\ell})
\label{Estep::logY_i_Mult}
\end{equation}
Term \ref{eic} is the same as in the Gaussian case given in \eqref{eq:termb_gauss_MNARz}. Finally, the EM algorithm can be summarized as follows

\medbreak
\textbf{E step:} For $k =1,\ldots,K$ and $i=1,\ldots,n$, compute
\begin{align*}
t_{ik}(\theta^{[r-1]})&=\frac{\pi^{[r-1]}_k \prod_{j, c_{ij}=0} \prod_{\ell=1}^{\ell_j}(\lambda_{kj}^\ell)^{y_{ij}^\ell}\rho(\alpha_{k})}{\sum_{h=1}^K \pi^{[r-1]}_h \prod_{j, c_{ij}=0} \prod_{\ell=1}^{\ell_j}(\lambda_{hj}^\ell)^{y_{ij}^\ell}\rho(\alpha_{h})} \\
(\tilde{y}_{ij,k}^\ell)^{[r-1]} &= c_{ij}(\theta_{kj}^\ell)^{[r-1]}+(1-c_{ij})y_{ij}^\ell, \: \forall j=1,\dots,d, \forall \ell=1,\dots,\ell_j.
\end{align*}

\medbreak
\textbf{M step:} The maximization of  $Q(\theta;\theta^{[r-1]})$ over $\theta$ leads to, for $k=1,\ldots,K$, 
\begin{equation*}
\begin{split}
\pi^{r}_k &= \frac{1}{n} \sum_{i=1}^n t_{ik}(\theta^{[r-1]})\\
(\theta_{kj}^\ell)^{r} &= \frac{\sum_{i=1}^n t_{ik}(\theta^{[r-1]}) (\tilde{y}_{ij,k}^\ell)^{[r-1]}}{\sum_{i=1}^n t_{ik}(\theta^{[r-1]})}, \: \forall j=1,\dots,d, \forall \ell=1,\dots,\ell_j.
\end{split}
\end{equation*}
The M-step for $\psi$ consists of performing a GLM with a binomial link for the following matrix: 
    \begin{equation}
    \label{eq:phi_HMNARz}
    (\mathcal{H}^{\textrm{MNAR$z$}})^{[r]}=\left(\begin{array}{c|ccc} c_{.1} & z_{.1} & \hdots & z_{.K}
    \\
    \vdots & \vdots & \vdots & \vdots \\
    c_{.d} & z_{.1} & \hdots & z_{.K}\end{array}\right)=\left(\begin{array}{c|ccc} c_{11} & z_{11}^{[r]} & \hdots & z_{1K}^{[r]}  \\
    \vdots & \vdots & \vdots & \vdots \\
    c_{n1} & z_{n1}^{[r]} & \hdots & z_{nK}^{[r]} \\
    \vdots & \vdots & \vdots & \vdots\\
    c_{1d} & z_{11}^{[r]} & \hdots & z_{1K}^{[r]}\\
    \vdots & \vdots & \vdots & \vdots\\
    c_{nd} & z_{n1}^{[r]} & \hdots & z_{nK}^{[r]}
    \end{array}.\right)
    \end{equation}

\subsection{Combining Gaussian mixture and latent class model for mixed data}

If the data are mixed (continuous and categorical), the formulas can be extended straightforwardly if the continuous and the categorical variables are assumed to be independent knowing the latent clusters. 
%And, the E-step encounters the same difficulty that it has in the continuous case. %Moreover, in this situation, it is recommended to assumed that the mixture covariance matrices are diagonal for a fair treatment between the continuous and the categorical variables. 
%then one has to use within each variable the needed EM step considering that there is no correlation between variables $\Longrightarrow$ AAC algorithm. 

\section{Additional numerical experiments on synthetic data}\label{sec:app_xp}

Note that in Figure \ref{fig:xp1_MNARz}, the differences for $n=100$ can be explained by the difference in initialization of the algorithms, which can play an important role for small sample sizes.  

\begin{figure}
\centering
\includegraphics[scale=0.8]{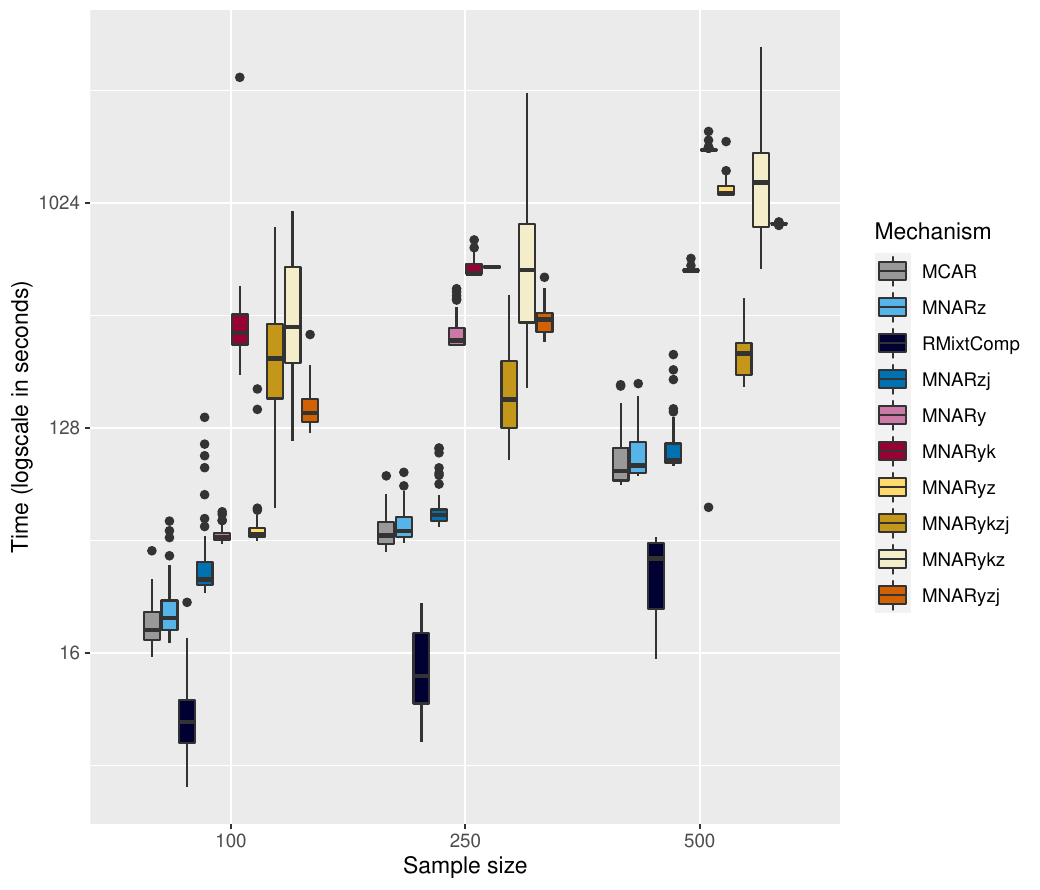}
\caption{\label{fig:comptime} Boxplot of the computational times (in seconds) obtained for 50 samples composed of $d = 6$ variables.} %with a misclassification rate of $10\%$ and a missing-data rate of $30\%$ in the whole dataset (see experiment on the consistency of the estimators illustrated by Figure \ref{fig:xp1}).}
\end{figure}

\begin{figure}
    \centering
    \includegraphics[scale=0.7]{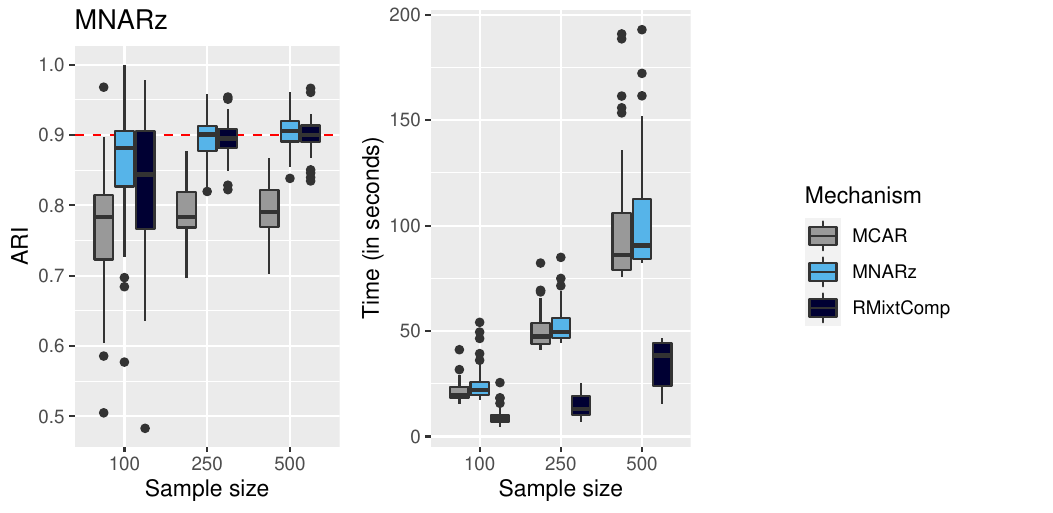}
    \caption{Left graphic: Boxplot of the ARI obtained for 50 samples composed of $d = 6$ variables and $n\in \{100,250,500\}$. In grey: our EM implementation for MCAR data, in blue: our EM implementation for MNAR$z$ data, in green: SEM algorithm of \textbf{RMixtComp} for MCAR data using the augmented data matrix.
    Right graphic: associated computational times (in seconds).}
    \label{fig:xp1_MNARz}
\end{figure}

\begin{figure}
     \centering
     \includegraphics[scale=0.5]{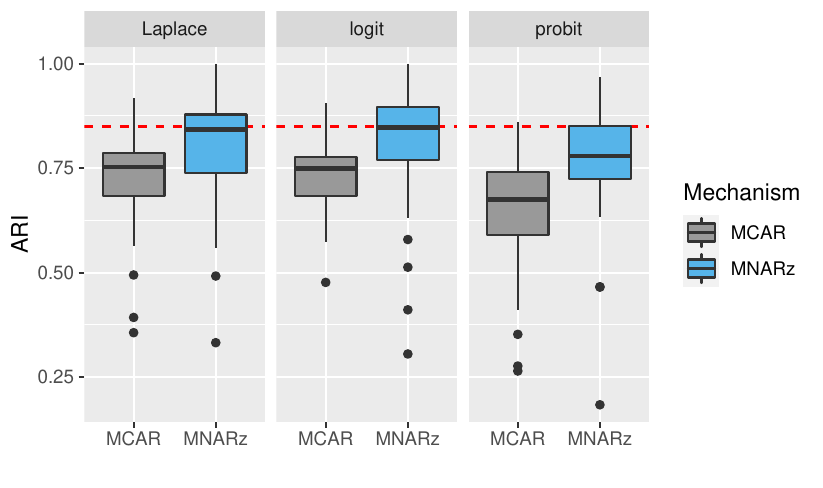}
     \caption{Boxplot of the ARI obtained for 50 samples composed of $d = 6$ variables. The missing values are introduced using a MNAR$z$ setting. The misclassification rate is of $15\%$. Impact of the misspecification of the link function.}
     \label{fig:xp3a_app}
\end{figure}
\begin{figure}
         \centering
         \includegraphics[scale=0.45]{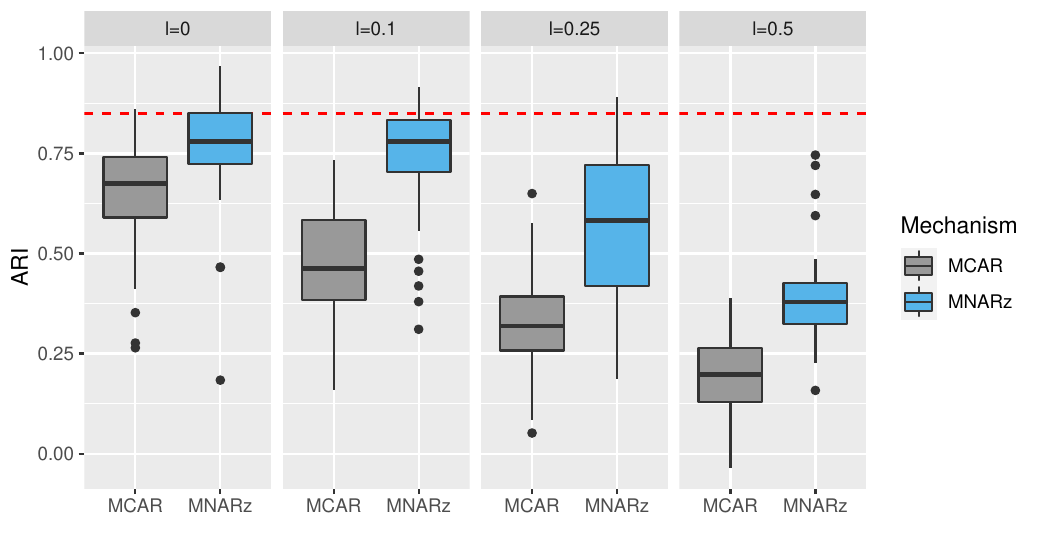}
         \caption{Boxplot of the ARI obtained for 50 samples composed of $d = 6$ variables. The missing values are introduced using a MNAR$z$ setting. The misclassification rate is of $15\%$. Impact of the misspecification of the data distribution.}
         \label{fig:xp3b_app}
\end{figure}
\begin{figure}
         \centering
         \includegraphics[scale=0.42]{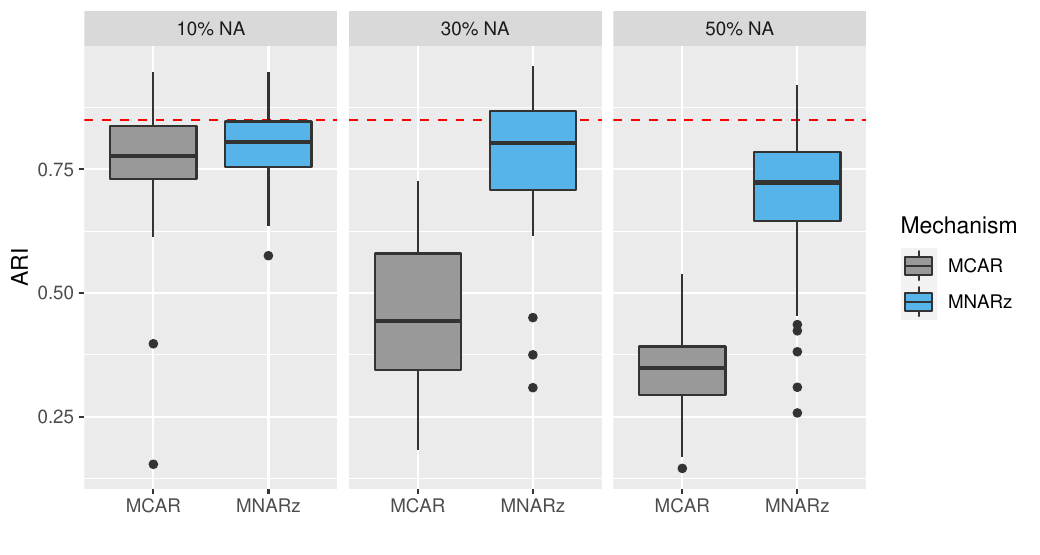}
         \caption{Boxplot of the ARI obtained for 50 samples composed of $d = 6$ variables. The missing values are introduced using a MNAR$z$ setting. The misclassification rate is of $15\%$. Impact of the percentage of missing values.}
         \label{fig:xp4a_app}
\end{figure}

\section{Complements on generic experiments}
\label{app:tab}

This section gives the values of $\delta$ (see \eqref{eq:simu}) $\psi$ (see \eqref{eq:meca}) and $\varphi$ (see \eqref{eq:simu}) used during the different experiments. As explained in Section \ref{sec:subsec:genericXP}, their choice allows to control the rates of misclassification
and missingness, as well as the interation between the variables and the class membership. To estimate these values, we have generated a large sample ($n=10^5$) and compute the misclassification rate and the missingness rate for several values of $\delta$ and $\psi$ and pick the ones which correspond to the setting of the experiment.

\begin{center}
\begin{table}
\begin{tabular}{cc}
\hline

  $d$  & $\varphi$    \\ \hline
  3 & $\varphi_{11}=\varphi_{22}=\varphi_{33}=1$\\
  \hline   
  6 & $\varphi_{11}=\varphi_{22}=\varphi_{33}=\varphi_{14}=\varphi_{36}=1$\\
  \hline  
  9 &  $\varphi_{11}=\varphi_{22}=\varphi_{33}=\varphi_{14}=\varphi_{36}=\varphi_{17}=\varphi_{27}=\varphi_{39}=1$
\end{tabular}
\caption{Choice of the values of $\varphi$ and $\alpha$ for all the experiments of Section \ref{sec:subsec:genericXP}. Other values $\varphi_{kj}$ are null.}
\end{table}
\end{center}

\begin{center}
\begin{table}
\begin{tabular}{ccccccc}
\hline
  $K$  & \% NA & link & rate of misclassification & $l$ & $\delta$  & $\alpha$  \\ \hline
3 & 30\% & probit & 90\% & 0 & 2.6
& $\begin{pmatrix}-1 &-0.3 & 0\end{pmatrix}$ \\
 \hline
 3 & 30\% & logit & 90\% & 0 & 2.76 & $\begin{pmatrix}-1.5 & -0.8 & 0.1\end{pmatrix}$ \\
\hline
 3 & 30\% & Laplace & 90\% & 0 & 2.85 & $\begin{pmatrix}-1.1 & 0.3 & 0\end{pmatrix}$
\\
 \hline
3 & 30\% & probit & 85\% & 0 &  2.27 & $\begin{pmatrix}-1 & -0.3 & 0\end{pmatrix}$ \\
 \hline
3 & 30\% & logit & 85\% & 0 & 2.44 & $\begin{pmatrix}-1.5 & -0.8 & 0.1\end{pmatrix}$ \\
\hline
3 & 30\% & Laplace & 85\% & 0 & 2.46 & $\begin{pmatrix}-1.1 & 0.3 & 0\end{pmatrix}$ \\
\hline
3 & 30\% & probit & 90\% & 0.1 & 2.3 & $\begin{pmatrix}-1.16 & 0.3 & -0.42 \end{pmatrix}$
\\
\hline
3 & 30\% & probit & 90\% & 0.25 & 2.17 & $\begin{pmatrix}-1.16 & 0.3 & -0.4 \end{pmatrix}$
\\
\hline
3 & 30\% & probit & 90\% & 0.5 & 1.85 & $\begin{pmatrix}-1.16 & 0.3 & -0.4 \end{pmatrix}$ \\
\hline
3 & 30\% & probit & 85\% & 0.1 & 1.97 & $\begin{pmatrix}-1.16 & 0.3 & -0.42 \end{pmatrix}$
\\
\hline
3 & 30\% & probit & 85\% & 0.25 & 1.86 & $\begin{pmatrix}-1.16 & 0.3 & -0.4 \end{pmatrix}$
\\
\hline
3 & 30\% & probit & 85\% & 0.5 & 1.57 & $\begin{pmatrix}-1.16 & 0.3 & -0.4 \end{pmatrix}$ \\
\hline
3 & 10\% & probit & 90\% & 0 & 2.18
& $\begin{pmatrix}-1.65 &-1.2 & -0.9\end{pmatrix}$ \\
\hline
3 & 50\% & probit & 90\% & 0 & 3.3
& $\begin{pmatrix}-0.55 &0.25& 1.7\end{pmatrix}$ \\
 \hline
 3 & 10\% & probit & 85\% & 0 & 1.95
& $\begin{pmatrix}-1.65 &-1.2 & -0.9\end{pmatrix}$ \\
\hline
3 & 50\% & probit & 85\% & 0 & 2.62
& $\begin{pmatrix}-0.55 &0.25& 1.7\end{pmatrix}$ \\
 \hline
\end{tabular}
\caption{Choice of the values of $\delta$ and $\alpha$ for all the experiments of Section \ref{sec:subsec:genericXP} and Appendix \ref{sec:app_xp} for the MNAR$z$ mechanism. $K$ denotes the number of class, the column denoted as \% NA gives the rate of missingness, the column called link gives the link function of the missing-data mechanism used in the introduction of the missing values, $l$ is the coefficient of correlation (anti-diagonal terms), $\delta$ is given in \eqref{eq:simu} and $\alpha$ in \eqref{eq:meca}.}
\end{table}
\end{center}

\begin{center}
\begin{table}
\begin{tabular}{ccc}
\hline

  $d$  & $\delta$  & $\alpha$  \\ \hline
  3 & 20 & $\begin{pmatrix} -0.4 & -0.65 & -0.65 \\ -1.1 & -1 & -1 \\ -0.6 & 0.4 & 0.4
  \end{pmatrix}$ \\
  \hline   
  6 & 2.5 & $\begin{pmatrix} -1.4 & -1.4 & -1.2 & -1.1 & -1 & -0.9 \\
  -0.6 & 0.4 & 0.4 & 0.3 & 0.1 & 0.1 \\
  -0.2 & -0.2 & -0.2 & -0.2 & -0.2 & -0.2
  \end{pmatrix}$ \\
  \hline  
  9 & 1.78 & $\begin{pmatrix} -0.5 & -0.65 & -0.65 & -1.1 & -1.7 & -1.7 & -1.4 & -1.4 & -1.4 \\
  -0.6 & 0.4 & 0.4 & -0.2 & 0.3 & 0.4 & 0.3 & 0.3 & 0.3 \\
  -0.4 & -0.4 &-0.4 &-0.4 &-0.4 &-0.4 &-0.4 &-0.4 & -0.4 
  \end{pmatrix}$
\end{tabular}
\caption{Choice of the values of $\delta$ and $\alpha$ for all the experiments of Section \ref{sec:subsec:genericXP} for the MNAR$\tzj$ mechanism.}
\end{table}
\end{center}

\begin{center}
\begin{table}[]
\begin{tabular}{cccc}
\hline
  $d$  & $\delta$  & $\alpha$  & $\beta$ \\ \hline
  3 & 3.5 & -1.56 & $\begin{pmatrix} 1.45 & 0.2 & -3
  \end{pmatrix}$ \\
  \hline  
  6 & 2.25  & -0.7 & $\begin{pmatrix} -3 & 0.3 & -3& -3& -2 & 1
  \end{pmatrix}$ \\
  \hline  
  9 & 1.98 &  -0.68
 & $\begin{pmatrix} 0.5 & 0.1 & -1.2 & 0.4 & -0.1 & -1.3 & 0.3 & -0.1 & -1
  \end{pmatrix}$
\end{tabular}
\caption{Choice of the values of $\delta$, $\alpha$ and $\beta$ for all the experiments of Section \ref{sec:subsec:genericXP} for the MNAR$y$ mechanism.}
\end{table}
\end{center}

\begin{center}
\begin{table}[]
\begin{tabular}{cccc}
\hline
  $d$  & $\delta$  & $\alpha$  & $\beta$ \\ \hline
    3 & 4.72 & $\begin{pmatrix}-1.2 &-0.8 &-0.5
  \end{pmatrix}$& $\begin{pmatrix} -3&0.3&1
  \end{pmatrix}$ \\
  \hline 
  6 & 2.12
  & $\begin{pmatrix} -1.35 & -0.29 & 0
  \end{pmatrix}$ & $\begin{pmatrix} -3& 0.3 & -3 &-3 & -2& 1
  \end{pmatrix}$ \\
  \hline  
  9 & 1.71
 &  $\begin{pmatrix}-1.34&-0.34&0
  \end{pmatrix}$
 & $\begin{pmatrix} -3& 0.3&-3&-2.8&-2&1&0.2&0.1&0.4
  \end{pmatrix}$
\end{tabular}
\caption{Choice of the values of $\delta$, $\alpha$ and $\beta$ for all the experiments of Section \ref{sec:subsec:genericXP} for the MNAR$yz$ mechanism.}
\end{table}
\end{center}

\begin{center}
\begin{table}[]
\begin{tabular}{cccc}
\hline
  $d$  & $\delta$  & $\alpha$  & $\beta$ \\ \hline
    3 & 2.55 & {\footnotesize $\begin{pmatrix}-1 &-0.95 &-0.9 \\
    0.75&0.7&0.8 \\
    -0.2 & -0.2 & -0.2
  \end{pmatrix}$} & {\footnotesize$\begin{pmatrix} -3 & 0.3 & -3
  \\
 0.3 & -3 & 0.3 \\
 -3 & 0.3 & -3
  \end{pmatrix}$} \\
  \hline 
  6 & 1.96
  & {\footnotesize $\begin{pmatrix} -1.2 & -1 & -0.9 & -0.9 & -0.7 & -0.8 \\
  -0.6 & 0.4 & 0.4 & 0.3 & 0.1 & 0.1 \\
  -0.4 & -0.4 & -0.4 & -0.4 & -0.4 & -0.4 
  \end{pmatrix}$} & {\footnotesize $\begin{pmatrix} -3 & 0.3 & -3 & -3 & -2 & 1 \\
  0.3 & -3 & 0.3 & -0.3 & -2 & 0.2 \\
  -3 & 0.3 & -3 & -3 & -2 & 1
  \end{pmatrix}$} \\
  \hline  
  9 & 1.45
 &  {\scriptsize $\begin{pmatrix}-1.4 & -1 & -1.1 & -1.1 & -0.9 & -0.8 & -1.2 & -1 & -1.1 \\
 0.3 & 0.5 & 0.2 & -0.6 & 0.4 & 0.4 & 0.3 & 0.1 & 0.1 \\
 -0.4 & -0.4 & -0.4 & -0.4 & -0.4 & -0.4 & -0.4 & -0.4 & -0.4
  \end{pmatrix}$}
 & {\scriptsize $\begin{pmatrix} -3& 0.3&-3&-3&-2&1&-3&0.3&0.2 \\
 0.3 & -3 & 0.3 & -0.3 & -2 & 0.2 & 0.2 & 0.3 & -0.3 \\
 -3 & 0.3 & -3 & -3 & -2 & 1 & -1 & -2 & -3
  \end{pmatrix}$}
\end{tabular}
\caption{Choice of the values of $\delta$, $\alpha$ and $\beta$ for all the experiments of Section \ref{sec:subsec:genericXP} for the MNAR$\tyk\tzj$ mechanism.}
\end{table}
\end{center}

\section{Traumabase dataset}\label{app:realdata}

\subsection{Impact of the MNAR$z$ process on the estimated partition}

Table \ref{tab:realdata_probaclass} gives the Euclidean distance between the conditional probabilities of the cluster memberships given the observed values of the variable \textit{Shock.index.ph} obtained using the algorithm considering MNAR$z$ data and those obtained using the algorithm considering MCAR data. For clarity, the latter quantity is reported here,
$$ \sqrt{\sum_{i=1}^n(\mathbb{P}(z_{ik}=1|y_{is}^\mathrm{obs};\theta^\mathrm{MCAR})-\mathbb{P}(z_{i\tilde{k}}=1|y_{is}^\mathrm{obs};\theta^\mathrm{MNAR}))^2}, \forall k,\tilde{k} \in \{1,2,3\}$$
with $s$ the index of the variable \textit{Shock.index.ph}, $\theta^\mathrm{MCAR}$ (resp. $\theta^\mathrm{MNAR}$)  the estimator returned by the algorithm considering MCAR data (resp. MNAR data). 

\newpage

\begin{landscape}
\begin{figure}
%\rotatebox{90}{
\includegraphics[scale=0.65]{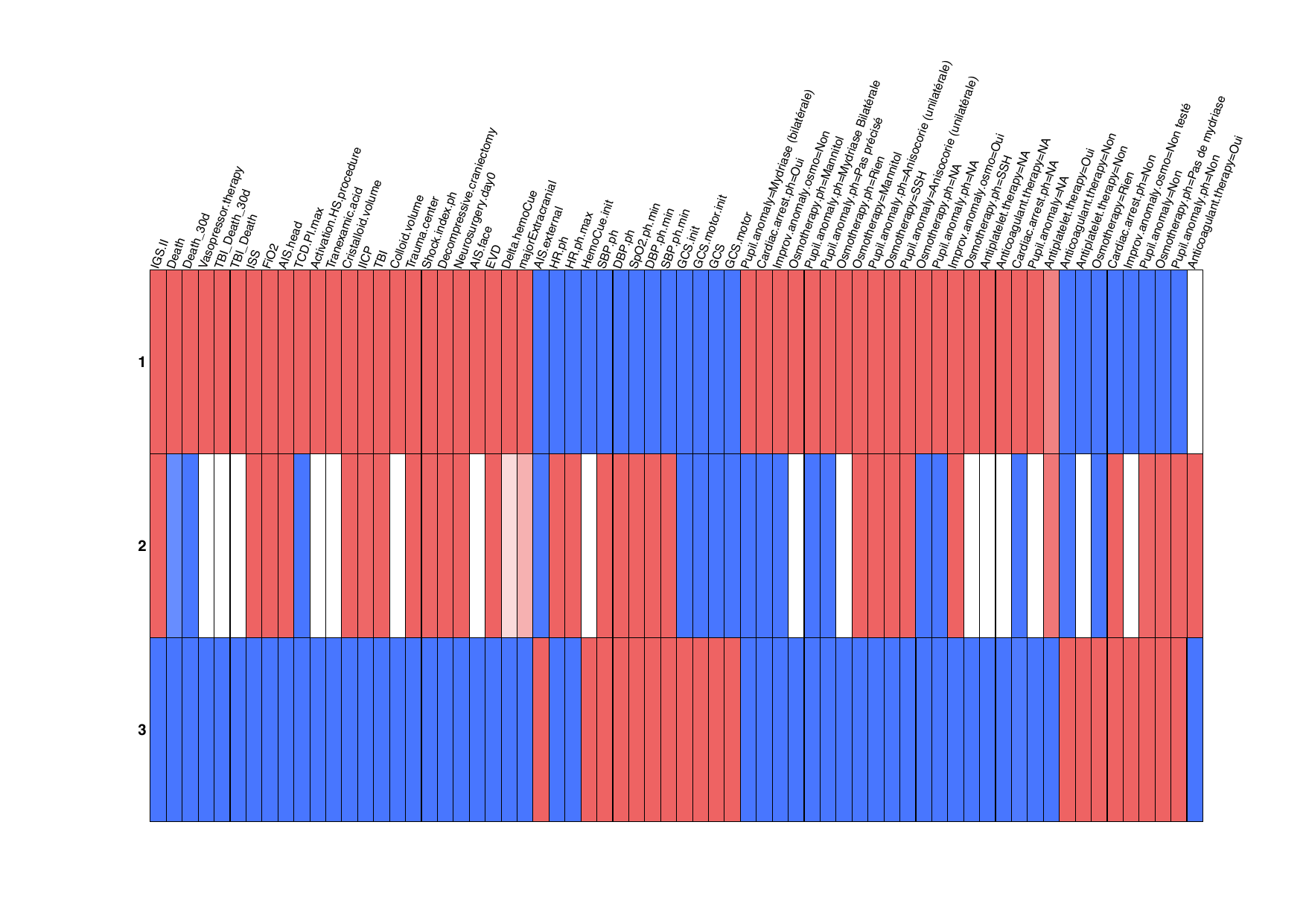}
%}
\caption{Plot of the catdes output for the EM algorithm applied to the Traumabase dataset considering MNAR$z$ data. We consider $K= 3$ clusters. A cell colored in blue (resp. red) means that the average value of the variable in the given cluster is significantly lower (resp. higher) than in the overall data. In addition, the more transparent the cell is, the less significant the difference between the behavior of the variable in the given cluster and in the overall data is.}
\label{fig:MNARz_41var}
\end{figure}
\end{landscape}

\subsection{Imputation performances in the Traumabase dataset}

We perform now simulations on the real dataset in order to be able to measure the quality of the imputation of our method compared to the multiple imputation \citep{buuren2010mice} (Mice). %\jj{c'est bizare si on veut regarder la qualité de l'imputation de se comparer à l'imputation multiple non?}. 
We introduce some additional missing values in three quantitative variables (\textit{TCD.PI.max, Shock.index.ph, FiO2}) by using the MNAR$z$ mechanism \eqref{eq:MNARz}. %\as{c'est mieux de faire ICE ?} 
The variables contain initially 51\%, 31\%, 7\% and finally 63\%, 50\% and 32\% missing values. The algorithm for continuous data specifically designed for MNAR$z$ data for $K=3$ classes is compared with mean imputation and multiple imputation in terms of mean squared error (MSE). Denoting by $\hat{Y} \in \mathbb{R}^{n\times d}$ the imputed dataset and $\tilde{C}\in \mathbb{R}^{n\times d}$ the indicator pattern of missing data newly introduced, the mean squared error is given by
$$\mathbb{E}[(\hat{Y}-Y)\odot\tilde{C}]_F^2\big/\mathbb{E}[Y\odot \tilde{C}]_F^2,$$
where $\odot$ is the Hadamard product and $\mathbb{E}[]_F^2=\mathbb{E}[\|.\|_F^2]$ denotes the expectation of the Frobenius norm squared.
In particular, to impute missing values using our clustering algorithm, we use the conditional expectation of the missing values given the observed ones, given that the data are assumed to be Gaussian and that all the parameters of the distribution are given by our algorithm. Imputation is carried out by taking the mean over $10^4$ draws. 
In Figure \ref{fig:MNARz_imp}, our clustering algorithm, designed for the MNAR setting, gives a significantly smaller error than other methods.

\begin{figure}[H]
\centering
\includegraphics[scale=0.6]{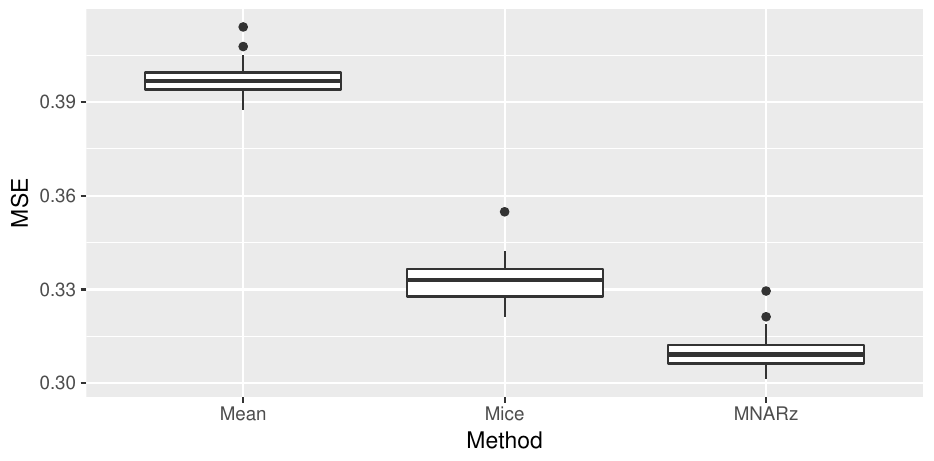}
\caption{Mean squared error of the imputation task for the Traumabase dataset.}
\label{fig:MNARz_imp}
\end{figure}

\subsection{Description of the variables in the Traumabase dataset}

A description of the variables which are used in Section \ref{sec_realdata} is given. Figure \ref{fig:percNA} gives the percentage of missing values per variable. The indications given in parentheses ph (pre-hospital) and h (hospital) mean that the measures have been taken before the arrival at the hospital and at the hospital.

\begin{itemize}
    \item \textit{Trauma.center} (categorical, integers between 1 and 16, no missing values): name of the trauma center (ph \& h).
    \item \textit{Anticoagulant.therapy} (categorical, binary variable, 4.3\% NA): oral anticoagulant therapy before the accident (ph). 
    \item \textit{Antiplatelet.therapy} (categorical, binary variable, 4.4\% NA): anti-platelet therapy before the accident (ph).
    \item \textit{GCS.init, GCS} (ordinal, integers between 3 and 15, 2\% NA \& 42\% NA): Initial Glasgow Coma Scale (GCS) on arrival on scene of enhanced care team and on arrival at the hospital (GCS = 3: deep coma; GCS = 15: conscious and alert) (ph \& h). 
    \item \textit{GCS.motor.init, GCS.motor} (ordinal, integers between 1 and 6, 7.6\% NA \& 43\%): Initial Glasgow Coma Scale motor score (GCS.motor = 1: no response; GCS.motor = 6: obeys command/purposeful movement) (ph \% h). 
    \item \textit{Pupil.anomaly.ph, Pupil.anomaly} (categorical, 3 categories: Non, Anisocoire (unilaterale), Mydriase Bilaterale, 2\% NA \& 1.7\%): pupil dilation indicating brain herniation (ph \& h).
    \item \textit{Osmotherapy.ph, Osmotherapy} (categorical, 4 categories: Pas de mydriase, SSH, Mannitol, Rien, 1.7\% NA and no missing values): administration of osmotherapy to alleviate compression of the brain (either Mannitol or hypertonic saline solution) (ph \& h)
    \item \textit{Improv.anomaly.osmo} (categorical, 3 categories: Non testé, Non, Oui, no missing values): change of pupil anomaly after ad- ministration of osmotherapy (ph).
    \item \textit{Cardiac.arrest.ph} (categorical, binary variable, 2.3\% NA): cardiac arrest during pre-hospital phase (ph).
    \item \textit{SBP.ph, DBP.ph, HR.ph} (continuous, 29.3\% NA \& 29.6\% NA \& 29.5\% NA): systolic and diastolic arterial pressure and heart rate during pre-hospital phase (ph).
    \item \textit{SBP.ph.min, DBP.ph.min} (continuous, 12.8\% NA \& 13\% NA): minimal systolic and diastolic arterial pressure during pre-hospital phase (ph).
    \item \textit{HR.ph.max} (continuous, 13.7 \% NA): maximal heart rate during pre-hospital phase (ph).
    \item \textit{Cristalloid.volume} (continuous, positive values, 30\% NA): total amount of prehospital adminis- tered cristalloid fluid resuscitation (volume expansion) (ph).
    \item \textit{Colloid.volume} (continuous, positive values, 31.3\% NA): total amount of prehospital administered colloid fluid resuscitation (volume expansion) (ph).
    \item \textit{HemoCue.init} (continuous, 34.9\% NA): prehospital capillary hemoglobin concentration (the lower, the more the patient is probably bleeding and in shock); hemoglobin is an oxygen carrier molecule in the blood (ph).
    \item \textit{Delta.hemoCue} (continuous, 37.2\% NA): difference of hemoglobin level between arrival at the hospital and arrival on the scene (h).
    \item \textit{Vasopressor.therapy} (continuous, no missing values): treatment with catecholamines in case of physical or emotional stress increasing heart rate, blood pres- sure, breathing rate, muscle strength and mental alertness (ph).
    \item \textit{SpO2.min} (continuous, 11.7\% NA): peripheral oxygen saturation, measured by pulse oxymetry, to estimate oxygen content in the blood (95 to 100\%: considered normal; inferior to 90\% critical and associated with considerable trauma, danger and mortality) (ph).
    \item \textit{TCD.PI.max} (continuous, 51.2\% NA): pulsatility index (PI) measured by echodoppler sonographic examen of blood velocity in cerebral arteries (PI > 1.2: indicates altered blood flow maybe due to traumatic brain injury) (h).
    \item \textit{FiO2} (categorical, in $\{1,2,3,4,5\}$, 6.8\% NA): inspired concentration of oxygen on ventilatory support (the higher the more critical; Ventilation = 0: no ventilatory support) (h).
    \item \textit{Neurosurgery.day0} (categorical, binary variable, no missing values): neurosurgical intervention performed on day of admission (h).
    \item \textit{IGS.II} (continuous, positive values, 2\% NA): Simplified Acute Physiology Score (h).
    \item \textit{Tranexomic.acid} (categorical, binary variable, no missing values): administration of the tranexomic acid (h). 
    \item \textit{TBI} (categorical, binary variable, no missing values): indicates if the patient suffers from a traumatic brain injury (h).
    \item \textit{IICP} (categorical, binary variable, 70.9\% NA): at least one episode of increased intracranial pressure; mainly in traumatic brain injury; usually associated with worse prognosis (h).
    \item \textit{EVD} (categorical, binary variable, no missing values): external ventricular drainage (EVD); mean to drain cerebrospinal fluid to reduce intracranial pressure (h). 
    \item \textit{Decompressive.craniectomie} (categorical, binary variable, no missing values): surgical intervention to reduce intracranial hypertension (h).  
    \item \textit{Death} (categorical, binary variable, no missing values): death of the patient  (h).
    \item \textit{AIS.head, AIS.face} (ordinal, discrete, integers between 0 and 6 and 4 1.7\% NA \& 1.7\% NA): Abbreviated Injury Score, describing and quantifying facial and head injuries (AIS = 0: no injury; the higher the more critical) (h).
    \item \textit{AIS.external} (continuous, discrete, integers between 0 and 5, 1.7\% NA): Abbreviated Injury Score for ex- ternal injuries, here it is assumed to be a proxy of information avail- able/visible during pre-hospital phase (ph/h).
    \item \textit{ISS} (continuous, discrete, integers between 0 and 75, 1.6\% NA): Injury Severity Score, sum of squares of top three AIS scores (h).
    \item \textit{Activation.HS.procedure} (categorical, binary variable, 3.7\% NA): activation of hemorragic shock procedure in case of HS suspicio (h).
    \item \textit{TBI\_Death} (categorical, binary variable, no missing values): death of the patients suffering from a traumatic brain injury (h).
    \item \textit{TBI\_Death\_30d} (categorical, binary variable, no missing values): death of the patients suffering from a traumatic brain injury in the 30 days (h).
    \item \textit{TBI\_30d} (categorical, binary variable, no missing values): traumatic brain injury in the 30 days (h).
    \item \textit{Death\_30d} (categorical, binary variable, no missing values): death in the 30 days (h).
    \item \textit{Shock.index.ph} (continuous, positive values, 30.5\% NA): ratio of heart rate and systolic arterial pressure during pre-hospital phase (ph).
    \item \textit{majorExtracranial} (categorical, binary variable, no missing values): major extracranial lesion (h). 
    \item \textit{lesion.class} (no missing values): partition given by the doctors with $K=4$ classes: axonal, extra, other, intra.
    \item \textit{lesion.grade} (no missing values): partition given by the doctors with $K=3$ classes: high, low, other.
\end{itemize}

\begin{figure}
\hspace{-3cm}
\includegraphics[scale=0.5]{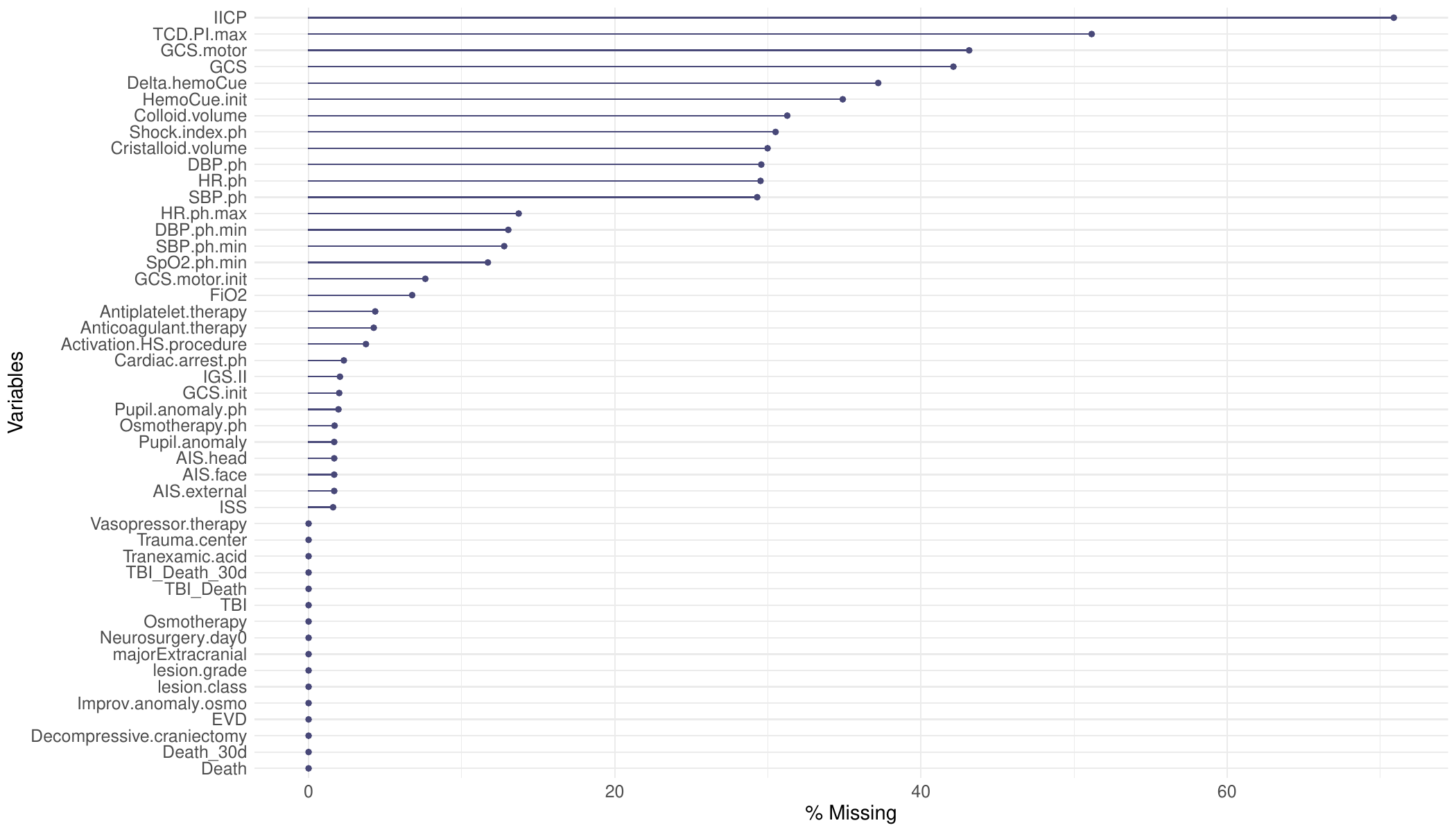}
\caption{Percentage of missing values per variable for the Traumabase dataset.}
\label{fig:percNA}
\end{figure}

\end{appendices}

%%===========================================================================================%%
%% If you are submitting to one of the Nature Portfolio journals, using the eJP submission   %%
%% system, please include the references within the manuscript file itself. You may do this  %%
%% by copying the reference list from your .bbl file, paste it into the main manuscript .tex %%
%% file, and delete the associated \verb+\bibliography+ commands.                            %%
%%===========================================================================================%%

\bibliography{sn-bibliography}% common bib file
%% if required, the content of .bbl file can be included here once bbl is generated
%%\input sn-article.bbl

\end{document}